\newtheorem{theorem}{Theorem}
\newtheorem{remark}{Remark}
\newtheorem{lemma}{Lemma}
\newtheorem{assumption}{Assumption}
\newtheorem{problem}{Problem}
\newcommand{\Abf}{\pmb{A}}\newcommand{\Bbf}{\pmb{B}}\newcommand{\Cbf}{\pmb{C}}\newcommand{\Hbf}{\pmb{H}}\newcommand{\Ibf}{\pmb{I}}\newcommand{\Lbf}{\pmb{L}}\newcommand{\Mbf}{\pmb{M}}\newcommand{\Ubf}{\pmb{U}}\newcommand{\Xbf}{\pmb{X}}
\newcommand{\cbf}{\pmb{c}}\newcommand{\ebf}{\pmb{e}}\newcommand{\fbf}{\pmb{f}}\newcommand{\gbf}{\pmb{g}}\newcommand{\nbf}{\pmb{n}}\newcommand{\pbf}{\pmb{p}}\newcommand{\sbf}{\pmb{s}}\newcommand{\ubf}{\pmb{u}}\newcommand{\vbf}{\pmb{v}}\newcommand{\xbf}{\pmb{x}}
\begin{document}
	%
	% paper title
	% Titles are generally capitalized except for words such as a, an, and, as,
	% at, but, by, for, in, nor, of, on, or, the, to and up, which are usually
	% not capitalized unless they are the first or last word of the title.
	% Linebreaks \\ can be used within to get better formatting as desired.
	% Do not put math or special symbols in the title.
	\title{Vision Based Autonomous UAV Plane Estimation And Following for Building Inspection}
	%
	%
	% author names and IEEE memberships
	% note positions of commas and nonbreaking spaces ( ~ ) LaTeX will not break
	% a structure at a ~ so this keeps an author's name from being broken across
	% two lines.
	% use \thanks{} to gain access to the first footnote area
	% a separate \thanks must be used for each paragraph as LaTeX2e's \thanks
	% was not built to handle multiple paragraphs
	%
	
	\author{Yang~Lyu,~\IEEEmembership{Member,~IEEE,}
		Muqing~Cao,~%6~\IEEEmembership{Student Member,~IEEE,}
		Shenghai~Yuan, 
		and~Lihua~Xie,~\IEEEmembership{Fellow,~IEEE}% <-this % stops a space
		\thanks{ 
			Yang Lyu, Muqing Cao, Shenghai Yuan and Lihua Xie (corresponding author) are with School of Electrical and Electronic Engineering, Nanyang Technological University, 50 Nanyang Avenue, Singapore 639798. (email: \{lyu.yang, mqcao, shyuan, elhxie\}@ntu.edu.sg)}
		
		}

	\maketitle
	
	% As a general rule, do not put math, special symbols or citations
	% in the abstract or keywords.
	\begin{abstract}
		Unmanned Aerial Vehicle (UAV) has already demonstrated its potential in many civilian applications, and the fa\c{c}ade inspection is among the most promising ones. In this paper, we focus on enabling the autonomous perception and control of a small UAV for a fa\c{c}ade inspection task.
		Specifically, we consider the perception as a planar object pose estimation problem by simplifying the building structure as concatenation of planes, and the control as an optimal reference tracking control problem.
%		The paper may contribute in two folds. 
		First, a vision based adaptive observer is proposed which can realize stable plane pose estimation under very mild observation conditions. Second, a model predictive controller is designed to achieve stable tracking and smooth transition in a multiple planes scenario, while the persistent excitation (PE) condition of the observer and the maneuver constraints of the UAV are satisfied. The proposed autonomous plane pose estimation and plane tracking methods are tested in both simulation and practical building fas\c{c}ade inspection scenarios, which demonstrate their effectiveness and practicability.  
	\end{abstract}
	
	% Note that keywords are not normally used for peerreview papers.
%	\begin{IEEEkeywords}
%		, IEEEtran, journal, \LaTeX, paper, template.
%	\end{IEEEkeywords}

	% For peer review papers, you can put extra information on the cover
	% page as needed:
	% \ifCLASSOPTIONpeerreview
	% \begin{center} \bfseries EDICS Category: 3-BBND \end{center}
	% \fi
	%
	% For peerreview papers, this IEEEtran command inserts a page break and
	% creates the second title. It will be ignored for other modes.
	\IEEEpeerreviewmaketitle

	\section{Introduction}
	Autonomous vehicle, especially the Unmanned Aerial Vehicle (UAV)  has attracted tremendous research interests in recent years due to its potential capability of improving efficiency and safety in many military and civilian applications \cite{cai2014survey}. Among them, the building {\it fa\c{c}ade inspection} is considered as one of the most promising applications for small size UAVs \cite{ROCA2013128}. During a {fa\c{c}ade inspection} task,  a  UAV  has  to  maintain a  desired  distance to the building fa\c{c}ades while following an inspection path allows  its  on-board inspection sensors  to  cover  the  entire surface area of the building. To carry out the challenging autonomous inspection task, the UAV needs to persistently measure the unknown building structure and continuously control the vehicle so as to effectively cover the building fas\c{c}ades. 
	In this paper, we focus on solving the fa\c{c}ade plane pose estimation and optimal plane following control together, thus demonstrating autonomous and efficient fas\c{c}ade inspection capability.
	%For the more challenging autonomous inspection, the UAV needs to calculate the pose  of the 	path in real-time to adaptively follow the structure of the inspection subject which needs to be perceived persistently during the UAV operation.
	
	%Although with great advantages such as flexibility, low cost and efficiency of the quadrotors, autonomously operate it during inspection task described above still faces tough challenges, among which, reliable sensing and efficient control are of fundamental importance. 
	%Therefore in this paper we focus on solving perception of the inspection subject, and stable control of the platform to efficiently collect information about the inspection target.
	The estimation of the fas\c{c}ade pose is a prerequisite for autonomous inspection of a subject whose detailed structural information is unknown. The inspection subject, e.g., a building, often has significant structural features, such as points, lines and planes. With proper clustering techniques \cite{peng2019deep,peng2018structured}, most buildings can be approximated as multiple concatenated planes, therefore makes the plane one of the most ubiquitous geometric primitives in an inspection scenario. There are multiple sensing options for the plane estimation, such as camera \cite{Zhou2020Ground}, RGB-D camera \cite{Lee2012Indoor}, LiDAR \cite{asvadi20163d}, and sonar sensor \cite{englot2013three}. The LiDAR sensor is often considered with unparalleled sensing capabilities for robot 3-D space operations. Nevertheless, its high payload requirement may hinder its practical implementation on small UAV with limited payload capabilities. On the other hand, the camera, with its low payload requirement and high fidelity sensing information, is considered as the most common sensor for robot operation. By taking 2D image information from multiple views, the 3D geometry of object can be retrieved\cite{hu2018local,hu2019multi}.
	Therefore, besides serving as an inspection sensor of the building fas\c{c}de, we consider to  implement the camera on-board a moving platform to estimate the pose of the planar fas\c{c}ades. 
	%To guarantee the proper inspection movement, one prerequisite is that we need to properly estimate the structure of the target object. Specifically, we consider the perception of the building as a planar object pose estimation problem using vision sensor for its low payload requirement and high fidelity sensing information.  
	
%	\paragraph{Vision based structure estimation}
	The research on vision based structure estimation has been active for years. Methods may be roughly divided into two streams, namely the homography based methods \cite{chavez2017homography,Zhou2020Ground}, and direct depth estimation methods \cite{Spica2013a,Spica2014Active,Giordano2014An,Tahri2017Brunovsky}.  The first method exploits the homography constraint to recover the spatial structure from a moving camera observing feature points. The homography matrix is calculated from feature correspondences of 2D image pairs. As illustrated in \cite{chavez2017homography}, an extended Kalman filter is implemented to estimate the landing plane, with vision measurements derived from the homography relationship. A more recent result reported in \cite{Zhou2020Ground} utilizes a 2D homography matrix to recover the plane scale information from the ground plane and camera with constant height. As the homography based method depends on the feature correspondences, which may require further treatment of features variation and therefore hinder its robustness in a dynamic scenario. The second method
	\cite{Spica2013a,Spica2014Active,Giordano2014An,Tahri2017Brunovsky,huang2019structure} is to obtain the structural information, such as the depth map, based on the structure from motion (SfM) \cite{ullman1979interpretation}. Among them, the active vision scheme  proposed in \cite{Spica2013a} is a well established method for depth estimation in 3-D scenarios. Subsequent works \cite{Spica2014Active} and \cite{Giordano2014An} extended the proposed framework to more structural scenarios, such as plane, sphere, and cylinder. The framework was proved to be locally exponentially stable under the persistent excitation condition that the sensor has a non-zero velocity. However it may still suffer from coupling and nonlinearity. To overcome those drawbacks, \cite{Tahri2017Brunovsky} proposed an improvement by decoupling the model into a linear form. In general the second method is more applicable than the first method during a mobile robot operation as it applies frame by frame feature tracking and is robust to feature gain/loss.
%	More recently, inferring depth from images based on learning based method becomes a new trends for the 3D reconstruction \cite{liu2010single,taskGoldman_2019_CVPR_Workshops,Poggi2018Towards}. Especially in \cite{Poggi2018Towards}, a simplified neural network is design to achieve real time estimation single from image with acceptable accuracy. Although the learning based method above seem promising for robot o  The may not be ready to use in the complicated building inspection scenarios.
{ 	More recently, the learning based method becomes a new trend by training a deep neural network to infer depth from images \cite{liu2010single,Goldman2019,Poggi2018Towards}. Nevertheless, the learning based method may still not be ready for field applications due to its high dependency on training data and high computation cost.}
	 Therefore in this paper we consider implementing an observer based on SfM to realize stable plane pose estimation during an inspection task. 
	
	After measuring the plane pose, the plane path following for the inspection needs to be designed accordingly. With known inspection subject information, the path design is nothing more than the most commonly researched coverage path planing\cite{galceran2013survey,englot2013three,triangulation2020An}. A detailed review of the coverage path planing with a known inspection map is given in \cite{galceran2013survey}, where different modeling patterns of coverage path planing are detailed. For instance, \cite{englot2013three} proposed a space sampling based method for probabilistic complete path planing, where the availability of a 3-D CAD model is assumed. \cite{triangulation2020An} described a coverage path planing method by dividing the known target region into several regular triangulations and then designing a path satisfying different constraints, such as sensing range, map coverage, collision avoidance. 
	A more realistic assumption is that, although a global map is available, the inspection platform still needs to properly adjust its trajectory based on the measurement from the on-board sensors due to unforeseen situations, such as map inaccuracy, obstacle avoidance, etc. An  on-line  path  re-planing  algorithm  for underwater  structural  inspection using  autonomous  robot  is proposed in \cite{englot2013three}.
	With a given initial nominal path based on a priori information, the re-planing is to continuously adjust the nominal path based on  stochastic trajectory optimization and measurements from an on-board sonar sensor. Similar frameworks are also presented in \cite{Yazici2014Dynamic,Alexis2015Uniform}. 

	In the more challenging scenario without an accurate structure map, the vehicle is controlled based on local observation feedbacks to properly follow the fas\c{c}des of an inspection subject. The path following control of a UAV w.r.t. to planar surfaces is studied in \cite{Herisse2010general}, where the planar surfaces are detected based on cameras and a PID controller is designed with optical flow measurement feedbacks. Further, a receding horizon control law is proposed in \cite{hamada2018receding} that enables a fixed wing UAV to follow the ground plane, with a given lateral trajectory. More recently, a nonlinear Model Predictive Control (MPC) method is implemented in \cite{prediction2019cai} to solve the terrain following problem for an underwater vehicle during a seabed survey task. 
%	Obviously, the controller design method is more suitable for the inspection under unknown building structure, and therefore have to be based on sensor feedback. 
	Inspired by the works aforementioned, we aim at designing an on-line path planing method based on the MPC framework, which is known for its optimality and constraints satisfaction, to realize the plane following while comply with the fas\c{c}ade inspection requirement.
%	To guarantee the optimality and proper adaptation to the unknown inspection subject, we design an on-line path planing method based on Model Predictive Control (MPC) framework, which is known for its optimality and allows for constraints.

	In this paper, we focus on solving the autonomous fas\c{c}ade inspection with regard to an unknown inspection subject. On-line solutions for both vision based fas\c{c}ade plane pose estimation and optimal inspection control are provided. First, the building fas\c{c}ades are approximately modeled as concatenation of planes and a vision based plane pose observer is proposed. Upon that, a constrained MPC based plane following control strategy which incorporates inspection preferences is designed.
	The contributions of the paper are two folds. First, we propose a vision based observer to directly estimate the normal vector and depth of a plane from 2D image features of consecutive frames. The PE condition that guarantee the asymptotic stability is investigated. Second, we design a MPC based plane following controller that bears multiple constraints, including PE condition and state and control constraints. The feasibility and stability are guaranteed by designing an appropriate adaptive reference update law and terminal conditions. The proposed framework is validated in practical inspection scenarios.
	
	The remaining of the paper is organized as follows: Section \ref{Sec_II} models the inspection problem as plane estimation and path following problems for UAV, and Section \ref{Sec_III} and Section \ref{Sec_IV} respectively describe the plane pose estimation and plane following control. Section V and Section VI describe the simulation and experimental results based on the proposed methods respectively. Section VII concludes the paper.
%	Second, we design a MPC based plane tracking controller bearing multiple constraints, including P.E. condition and state and control constraints, while the feasibility and stability are guaranteed.  
%	 and the persistent conditions that guarantee the asymptotic stability condition is given. Afterward, a constrained MPC based planar object tracking control strategy is designed. To guarantee the close loop feasibility and stability of the whole system, an adaptive discrete time  update law for the pose estimation as well as a MPC terminal condition are provided. 
	
%	approximate the model of building fas\c{c}ade as concatenation planes, and therefore interpret the inspection task as planar pose estimation and 
	\section{Problem Formulation}
	\label{Sec_II}
	The fa\c{c}ade inspection problem using a UAV platform is formulated in this section. Specifically, the inspection task is to simultaneously estimate and follow a planar object based on an on-board vision sensor. 
	
	The process of the building inspection task and the coordinates are illustrated in Fig.\ref{fig:Coordination_illustration}. First, the plane pose parameters are estimated in the camera frame, denoted as $\mathcal{C}$, based on the 2D measurements from the image frame, denoted as $\mathcal{I}$. Then the plane parameters are transformed to the global frame, denoted as $\mathcal{G}$, where the inspection preferences are described and the plane following controller is executed.
	
%%	
%	First, a fixed reference system $\mathcal{G}$ is defined to express the inspection pattern of the platform in 3D space. By installing a camera, the orientation and the depth of the planar object are estimated in the camera coordinate system $\mathcal{C}$ through the 2D measurement in image coordinate system $\mathcal{I}$.
	\begin{figure}{H}
		\centering
		\includegraphics[width=0.9\linewidth]{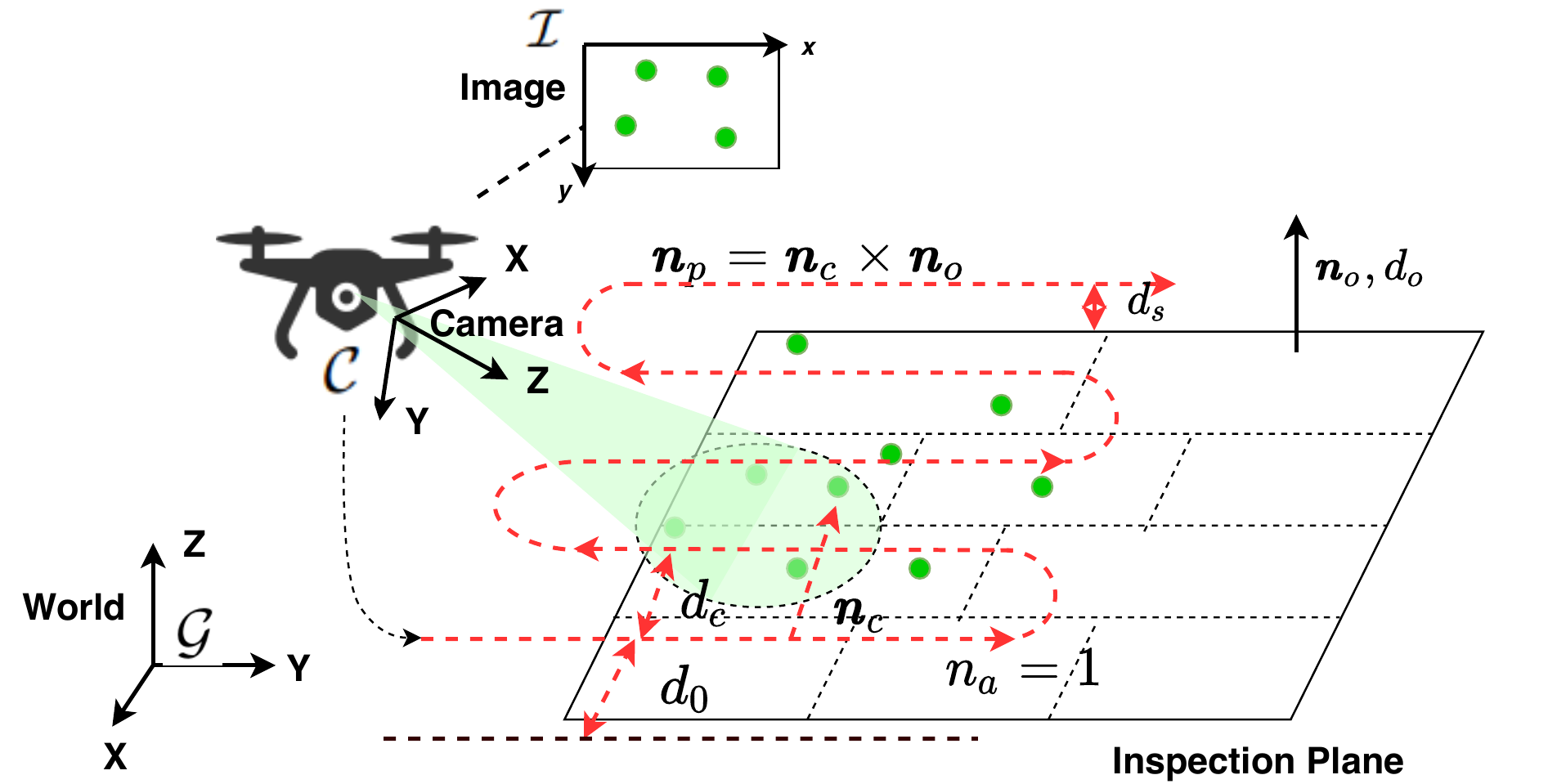}
		\caption{The coordination system illustration and inspection preferences.}
		\label{fig:Coordination_illustration}
	\end{figure}
	\subsection{Models}\subsubsection{Planar object}
	Let $\mathcal{O} = \{\nbf_o, d_o\}$ denote a plane. A point $\pbf$ that belongs to $\mathcal{O}$ satisfies 
	\begin{equation*}
	\nbf_{o}^T\pbf + d_o = 0,
	\end{equation*}
	where $\nbf_o= [n_x, n_y, n_z]^T\in \mathbb{S}^2$ is the plane unit normal vector and $d_o\in \mathbb{R}$ is the distance of $\mathcal{O}$ to the original point of a specific coordinate system.	
	Further in the homogeneous coordinate, the above equation can be written as $\bar{\nbf}_o\bar{\pbf} = 0$, where $\bar{\pbf}=\left[\pbf^T ,1\right]^T$, $\bar\nbf_o = [\nbf_o^T, d_o]^T$. The $\bar\nbf_o$ is not known a prior and needs to be estimated during the inspection operation.
	
	\subsubsection{Moving platform}
	For simplification, the platform is modeled as a discrete-time second order dynamics system in 3-D space: 
	\begin{equation}
	\label{dyn_model}
	\xbf_{k+1} = \Abf \xbf_k + \Bbf \ubf_k,
	\end{equation}
	where
	\[{ \Abf = \begin{bmatrix}
	1 &T_s\\
	0 &1
	\end{bmatrix}\otimes \Ibf_3, \Bbf = \begin{bmatrix}
	0.5T_s^2 \\ T_s
	\end{bmatrix}\otimes \Ibf_3,
}\]
	and the state vector $\xbf_k= [\pbf^T_k, \vbf^T_k]^T\in \mathbb{R}^6$ includes the position and velocity of the platform at time instance $k\in \mathbb{Z}^{+}$ in the global frame $\mathcal{G}$. { $T_s$ is the sampling time interval between time instance. $\ubf_k$ is the acceleration control input.}
	Norm constraints are imposed on the velocity and acceleration control, as
	\[\|\vbf_k\|_b\le \bar{v}, \quad \|\ubf_k\|_b\le \bar{u},\]
	where $\bar{v}$ and $\bar{u}$ represent the maximum velocity and acceleration input respectively.
{ 	$\|\cdot\|_b$ denotes some norms according to the platform property, with $b= 2$ and $b= \infty $ denoting the 2-norm and infinity norm, respectively.}
{ \begin{remark}
	In this paper, we only consider the mission level control of a UAV to achieve the desired inspection trajectory pattern by feeding in acceleration control commands to produce position and velocity outcome, like the linear model (1). 
	We assume that there is a low level autopilot controller on-board the UAV that takes account the nonlinear and coupled dynamic of the platform, and manipulates the physical actuators to achieve the effect of the desired input $\ubf$.
\end{remark}}
	\subsection{Objective}
	\label{objective}
	The UAV based fas\c{c}ade inspection, as illustrated in Fig. \ref{fig:Coordination_illustration}, is to generate a proper maneuver pattern based on the fas\c{c}ade plane parameter $\bar{\nbf}_o$ and the inspection pattern, so as to effectively cover the surface of the inspection subject. 
	
%	with regard to the fas\c{c}ade plane parameter $\bar{\nbf}_o$, while comply with the inspection patterns, so as to 
	\begin{itemize}
		\item First, to guarantee proper camera observations as well as to avoid collisions, the UAV needs to keep a constant separation distance $d_s$ to the inspection plane, that is  
		\begin{equation}
		\label{sep_dis}
		\nbf_o^T\pbf_k + (d_o-d_s) = 0.  
		\end{equation}
		\item Second, to effectively cover the inspection surface, the UAV trajectory should follow some specific inspection pattern, such as the zig-zag path defined by the $d_c$ and $\nbf_c$, in Fig. \ref{fig:Coordination_illustration},
		\begin{equation}
		\label{pattern}
		\nbf_c^T\pbf_k - (n_ad_c+d_0)=0,
		\end{equation}
		where $\nbf_c$ is the inspection path incremental direction, $d_0$ is the initial displacement, and $d_c$ is the displacement between two consecutive inspection rounds. $n_a$ is the counting number of the inspection rounds and increases by one when a round reaches a boundary.
 
		\item Additionally, as effective inspection relies on stable camera motion, a reference velocity $v_{r}$ that provides a trade-off between efficiency and inspection quality is also required, that is  
			\begin{equation}
			\label{velocity}
			\nbf_p^T \vbf_k - (-1)^{n_a}v_{r} = 0,
			\end{equation}
	\end{itemize} 
	where $\nbf_p = \nbf_c\times \nbf_o$ is the direction of the reference velocity.
%	$a$ is the round number of the zig-zag trajectory.
	For the inspection of typical buildings, we can set the inspection direction upward by letting $\nbf_c = \begin{bmatrix}
	0 & 0 &1
	\end{bmatrix}^T$. 
	
	With the definitions and models above, in this paper we consider to solve two consecutive problems, which are 1) plane parameters $\nbf_o,d_o$ estimation, and 2) following control of the planar object satisfying Equation (\ref{sep_dis}) to (\ref{velocity}), which will be described separately in Section \ref{Sec_III} and Section \ref{Sec_IV}. 
	
	\section{Plane Estimation}
	\label{Sec_III}
	In this part, the plane pose estimation problem is studied. Specifically, we consider that a camera is installed on the UAV to obtain 2D visual features of the fas\c{c}de plane, then the plane parameters in the camera frame  $\mathcal{C}$, denoted as $\nbf_o^c$,  $d_o^c$,  are estimated. Finally, based on the pose information of the UAV, the plane parameters in the global frame, $\nbf_o$,  $d_o$ can be estimated.
		\subsection{Plane observer design}
	To begin with, we simply consider one 3-D point in the camera frame $\mathcal{C}$, denoted as $\Xbf^c = [X^c,Y^c,Z^c]^T\in \mathbb{R}^3$, projected onto the 2-D image plane as 
	 $\sbf = \left[x, y \right]^T\in\mathbb{R}^2$ with the pin-hole model, then the following relationship holds,
	\begin{equation}
	\label{point_plane}
	\frac{1}{Z^c}  = -\frac{(\nbf^c_o)^T}{d^c_o}\begin{bmatrix}
	x\\y\\1
	\end{bmatrix},
	\end{equation}
	where $\nbf^c_o$ and $d^c_o$ define one plane that the 3-D point belongs to in the camera frame, and $Z^c$ is the depth of the 3-D point in the camera frame.
	
	The relationship between camera motion and pixel dynamics can be expressed with the well-known differential equation (\ref{visual_servoing}), 
	\begin{equation}
	\label{visual_servoing}
	\dot{\sbf} = \Lbf\begin{bmatrix}
	\vbf^c\\\pmb\omega^c
	\end{bmatrix},
	\end{equation}
	where {\small$\Lbf = \begin{bmatrix} 
	{-1 / Z^c} & {0} & {x / Z^c} & {x y} & {-1-x^{2}} & {y} \\
	{0} & {-1 / Z^c} & {y / Z^c} & {1+y^{2}} & {-x y} & {-x}
	\end{bmatrix}$}. %$\ubf=\left[(\vbf^c)^T, (\pmb\omega^c)^T\right]^\top\in \mathbb{R}^6$ denotes the angular/translational velocity of the camera. 
	$\vbf^c = \begin{bmatrix}
	v_x & v_y & v_z
	\end{bmatrix}^T\in\mathbb{R}^3$ and $\pmb\omega^c = \begin{bmatrix}
	\omega_x & \omega_y & \omega_z
	\end{bmatrix}^T\in\mathbb{R}^3$ denote the translational and angular velocity of the camera frame respectively.
	
	For more intuitive formulation, let $ \pmb\chi = -\frac{\nbf_o}{d_o}\in\mathbb{R}^3$. Then by substituting equation (\ref{point_plane}) into equation (\ref{visual_servoing}), we have the feature motion expressed in terms of $\pmb\chi$ as
%	\begin{small}
		\begin{equation}
		\begin{bmatrix}
		\dot{x}\\\dot{y}
		\end{bmatrix} = \begin{bmatrix}
		(xv_z -v_x)\pmb\chi^T\bar{\pbf} + {x y}\omega_x   -(1+x^{2})\omega_y + {y}\omega_z\\
		(yv_z - v_y)\pmb\chi^T\bar{\pbf} + (1+y^{2})\omega_x  {-x y}\omega_y  {-x}\omega_z
		\end{bmatrix}. 
		%			+\begin{bmatrix}
		%			{x y}\omega_x & -(1+x^{2})\omega_y & {y}\omega_z \\
		%			(1+y^{2})\omega_x & {-x y}\omega_y & {-x}\omega_z
		%			\end{bmatrix}.
		%			\begin{bmatrix}
		%			\omega_x\\ \omega_y\\ \omega_z
		%			\end{bmatrix}.
		\end{equation}
%	\end{small}
{ 	\begin{remark}
		The above representation  $\pmb\chi\in\mathbb{R}^3$ is a minimum representation of a 3D plane similar to the closest point approach \cite{yang2019observability}. The implementation of inverse depth here instead of depth can simplify the modeling of the observer due to the camera model (\ref{point_plane}). The inverse depth is widely used in camera based applications, such as visual servoing \cite{chaumette2007visual}, SLAM \cite{civera2008inverse}.
	\end{remark}}
	The derivative of the plane parameter $\pmb\chi$ based on its definition is
	\begin{equation}
	\label{derivatives}
	\dot{\pmb\chi} = -\frac{\dot{\nbf}_od_o - \nbf_o\dot{d}_o}{d_o^2}.
	\end{equation}
	The derivatives of ${\nbf}_o$ and $d_o$, by the camera motion $\vbf^c$ and $\pmb\omega^c$, are
	\begin{align}
		\dot{\nbf}_o = \left[{\pmb\omega^c}\right]_{\times}\nbf_o,\\
		\dot{d}_o = \nbf_o \vbf^c.
	\end{align}
	Substitute the above equations into (\ref{derivatives}), the derivative of $\pmb\chi$ is 
	\begin{equation}
	\label{dotchi}
	\dot{\pmb\chi} = \pmb\chi\pmb\chi^T\vbf^c - \left[\pmb\omega^c\right]_{\times}\pmb\chi.
	\end{equation}
	Conclusively, the dynamics of the 2-D image feature and planar object parameter can be formulated as 
	\begin{equation}
	\left\{
	\begin{aligned}
	\dot{\sbf} &= \fbf_{\sbf}(\sbf,\ubf) + \pmb\Omega^T\pmb\chi,\\
	\dot{\pmb\chi} &= \fbf_{\pmb\chi}(\pmb\chi,\pbf,\ubf),
	\end{aligned}
	\right.
	\end{equation}
	where \[\fbf_{\sbf}(\sbf,\ubf) = \begin{bmatrix} 
	{x y} & {-1-x^{2}} & {y} \\
	{1+y^{2}} & {-x y} & {-x}
	\end{bmatrix}\pmb\omega,\]
	\[\pmb \Omega =\begin{bmatrix}
	x\\y\\1
	\end{bmatrix} \begin{bmatrix}
	xv_z - v_x & yv_z - v_y
	\end{bmatrix}, \]
	\[\fbf_{\pmb\chi}(\pmb\chi,\pbf,\ubf) =\pmb\chi\pmb\chi^T\vbf^c - \left[\pmb\omega^c\right]_{\times}\pmb\chi. \]
	Define the estimated tracking feature and plane parameter respectively as $\hat\sbf$ and $\hat{\pmb\chi}$, then the planar object parameter can be estimated with the following observer \cite{Spica2013a}:
	\begin{equation}
	\label{estimatro}
	\left\{
	\begin{aligned}
	\dot{\hat{\sbf}} =& \Lbf_{\pmb\omega}\pmb\omega + \pmb\Omega^T  \hat{\pmb\chi} + \Hbf \pmb\xi,\\
	\dot{\hat{\pmb\chi}} =& \hat{\pmb\chi}\hat{\pmb\chi}^T\vbf - \left[\pmb\omega\right]_{\times}\hat{\pmb\chi} + \pmb\lambda\pmb\Omega \pmb\xi,
	\end{aligned}\right.
	\end{equation}
	where $\pmb\xi= \sbf - \hat{\sbf}\in \mathbb{R}^2$ is the feature tracking error on the image plane. $\Hbf$ and $\pmb\lambda$ are the observer gains. 

		\begin{assumption}
			To simplify the problem, we assume that the motion of the camera can be accurately obtained with perfectly known movement of the UAV and translational relationship between the UAV frame and the camera frame.
		\end{assumption}
		With the estimated $\hat{\pmb\chi}$, the planar parameters in the camera frame can be calculated as
		\begin{align}
		\label{vectoranddepth}
		\hat\nbf_o^c = \frac{\hat{\pmb\chi}}{\|\hat{\pmb\chi}\|}, \hat d_o^c = \frac{1}{\|\hat{\pmb\chi}\|}.
		\end{align} 
		The parameters are transferred to the global frame as
		\begin{equation}\label{cameratoglobal}
			\begin{bmatrix}
			\nbf_o^g\\d_o^g
			\end{bmatrix} = \Mbf^{-T}\begin{bmatrix}
			\nbf^c_o\\d^c_o
			\end{bmatrix},
		\end{equation}
	 where $\nbf_o^g$ and $d_o^g$ denote the normal vector and depth in global frame. The superscript $g$ is omitted in the sequel without causing confusion.
{ 	 $\Mbf\in\mathtt{SE}(3)$ is a matrix of Lie group which is determined by the camera rotation and translation in the global frame\cite{hartley2003multiple}.}
		
	\subsection{Convergence Analysis}	
	Define the error vector of $\sbf$ and $\pmb\chi$ as $\ebf = \left[\pmb
	\xi^T, \pmb\varepsilon^T\right]^T$, where $\pmb\varepsilon = \hat{\pmb\chi} - \pmb\chi$, the error dynamics based on the above observer is 
	\begin{equation}
	\label{err_dyn}
	\left\{
	\begin{aligned}
	\dot{\pmb\xi} &= -\Hbf\pmb\xi + \pmb\Omega\pmb\varepsilon,\\
	\dot{\pmb\varepsilon} &= -\pmb\lambda\pmb\Omega\pmb\xi + \gbf(\pmb\varepsilon, \vbf,\pmb\omega),
	\end{aligned}
	\right.
	\end{equation}
	where $\gbf(\cdot)\triangleq \fbf_{\pmb\chi}(\pmb\chi,\pbf,\ubf) - \fbf_{\pmb\chi}(\hat{\pmb\chi},\pbf,\ubf)$. According to (\ref{dotchi}), the term $\gbf(\cdot)$ is a vanishing term w.r.t. the error $\pmb\varepsilon$, i.e., $\gbf(\pmb\varepsilon)\to \mathbf{0}$ as $\pmb\varepsilon\to \mathbf{0}$.
	\begin{lemma}(Persistency of Excitation\cite{Spica2013a})
		The origin of (\ref{err_dyn}) is locally exponentially stable if and only if the following persistency of excitation   condition holds:
		\begin{equation}
		\int_{t}^{t+T} \pmb\Omega(\tau) \pmb\Omega^{T}(\tau) d \tau \geq \beta \boldsymbol{I}_{n}>0, \quad \forall t \geq t_{0},
		\end{equation}
		for some $T>0$ and $\beta >0$, with $\Ibf_n$ being the $n\times n$ identity matrix.
	\end{lemma}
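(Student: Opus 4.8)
The plan is to first strip off the vanishing nonlinearity and reduce the claim to the linear time-varying (LTV) core of (\ref{err_dyn}), then establish ``PE $\Leftrightarrow$ uniform exponential stability'' for that core, and finally restore the nonlinear term by a standard total-stability (Lyapunov indirect method) argument. Write (\ref{err_dyn}) compactly as $\dot\ebf=\Abf(t)\ebf+\gbf(\ebf,t)$, where $\Abf(t)$ is the block matrix collecting $-\Hbf$ and the $\pm\pmb\Omega(t)$, $\pm\pmb\lambda\pmb\Omega(t)$ couplings, and $\gbf(\ebf,t)=[\,\mathbf0^T,\ (\fbf_{\pmb\chi}(\pmb\chi,\cdot)-\fbf_{\pmb\chi}(\hat{\pmb\chi},\cdot))^T\,]^T$. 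Since $\fbf_{\pmb\chi}$ is smooth in its first argument and, by (\ref{dotchi}), $\gbf$ vanishes at $\pmb\varepsilon=\mathbf0$, on any bounded neighbourhood of the origin one has $\|\gbf(\ebf,t)\|\le\kappa\|\ebf\|$ with $\kappa$ that can be made arbitrarily small by shrinking the neighbourhood, uniformly in $t$ thanks to the boundedness of $\vbf,\pmb\omega$ and of the admissible feature/parameter trajectories. Consequently, if the LTV system $\dot\ebf=\Abf(t)\ebf$ is uniformly exponentially stable, the perturbation is dominated and local exponential stability of the full system follows; conversely, exponential stability of (\ref{err_dyn}) forces that of its linearization. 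It therefore suffices to prove the equivalence for the LTV core.

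For sufficiency on the LTV core, start from the composite Lyapunov function $V(\ebf)=\tfrac12\pmb\xi^T\pmb\xi+\tfrac1{2\lambda}\pmb\varepsilon^T\pmb\varepsilon$ (taking $\pmb\lambda=\lambda\Ibf$, $\lambda>0$; a general positive-definite gain is treated the same way), whose derivative along (\ref{err_dyn}) with $\gbf\equiv\mathbf0$ is $\dot V=-\pmb\xi^T\Hbf\pmb\xi\le-\underline h\|\pmb\xi\|^2\le0$ with $\underline h=\lambda_{\min}(\Hbf)>0$. This already delivers uniform stability, boundedness of $\ebf(\cdot)$, and, via Barbalat's lemma, $\pmb\xi(t)\to\mathbf0$ --- but no decay rate. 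To obtain the exponential rate under PE I would add a cross term, taking $W(\ebf,t)=V(\ebf)+\mu\,\pmb\xi^T\pmb\Phi(t)\pmb\varepsilon$ with $\pmb\Phi(t)$ built from a truncated integral of $\pmb\Omega$ over the PE window and $\mu>0$ small; the PE inequality then renders the $\pmb\varepsilon$-channel strictly decreasing in $W$, yielding $\dot W\le-\gamma W$ on the relevant neighbourhood for some $\gamma>0$, hence $\|\ebf(t)\|\le c\,e^{-\gamma(t-t_0)/2}\|\ebf(t_0)\|$. Equivalently one may simply invoke the classical PE theorem for gradient-type adaptive error systems of this block form (see \cite{Spica2013a} and the references therein).

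Necessity follows by contraposition. Suppose the PE condition fails: for every $T>0$, $\beta>0$ there are $t$ and a unit vector $\vbf_0$ in the $\pmb\varepsilon$-space with $\int_t^{t+T}\|\pmb\Omega^T(\tau)\vbf_0\|^2\,d\tau<\beta$. Initialising the LTV core at $\pmb\xi(t_0)=\mathbf0$, $\pmb\varepsilon(t_0)=\vbf_0$ on such a window, the forcing seen by the $\pmb\xi$-channel has energy below $\beta$, so $\|\pmb\xi(t)\|$ stays $O(\sqrt\beta)$ and, from $\dot{\pmb\varepsilon}=-\pmb\lambda\pmb\Omega\pmb\xi$, $\|\pmb\varepsilon(t_0+T)-\vbf_0\|=O(\sqrt\beta)$; concatenating such windows shows that $\|\ebf(t)\|$ cannot satisfy $\|\ebf(t)\|\le c\,e^{-\alpha(t-t_0)}\|\ebf(t_0)\|$ for any fixed $c,\alpha>0$, so the origin of the core is not uniformly exponentially stable. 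By the linearization argument of the first paragraph, the full system (\ref{err_dyn}) is then not locally exponentially stable either.

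The main obstacle is the middle step --- upgrading ``$\dot V\le0$'', which by itself only gives marginal (Lyapunov) stability, to a genuine exponential decay under the PE hypothesis. This is the substance of the lemma; carrying it out rigorously requires either the cross-term construction above with careful bookkeeping of how $(\mu,\gamma)$ depend on $(\beta,T,\underline h,\sup_\tau\|\pmb\Omega(\tau)\|)$, or an appeal to the uniform-complete-observability characterisation of PE for the pair associated with $\Abf(t)$. The remaining ingredients --- the vanishing-perturbation (total stability) argument and the contrapositive for necessity --- are routine once this kernel is established.
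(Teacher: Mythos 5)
First, note that the paper itself does not prove this lemma: it is quoted verbatim from \cite{Spica2013a}, so there is no in-paper argument to match, and your attempt is really a reconstruction of that reference's proof. The overall architecture you choose (nominal adaptive-observer error system $\dot{\pmb\xi}=-\Hbf\pmb\xi+\pmb\Omega^T\pmb\varepsilon$, $\dot{\pmb\varepsilon}=-\pmb\lambda\pmb\Omega\pmb\xi$, classical ``PE $\Leftrightarrow$ uniform exponential stability'' for that pair, then a perturbation argument for $\gbf$) is indeed the standard route and the one the cited work follows. Deferring the central sufficiency step to the classical PE theorem (or to a cross-term Lyapunov construction whose bookkeeping you do not carry out) is acceptable at the level of this paper, which itself only cites the result.

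There is, however, a genuine gap in your reduction step. You claim that $\|\gbf(\ebf,t)\|\le\kappa\|\ebf\|$ with $\kappa$ ``arbitrarily small by shrinking the neighbourhood.'' That is false. From (\ref{dotchi}), $\fbf_{\pmb\chi}(\hat{\pmb\chi},\cdot)-\fbf_{\pmb\chi}(\pmb\chi,\cdot)=(\pmb\varepsilon\pmb\chi^T+\pmb\chi\pmb\varepsilon^T+\pmb\varepsilon\pmb\varepsilon^T)\vbf^c-\left[\pmb\omega^c\right]_{\times}\pmb\varepsilon$, so $\gbf$ contains terms that are \emph{linear} in $\pmb\varepsilon$ with coefficients of order $\|\pmb\chi\|\|\vbf^c\|$ and $\|\pmb\omega^c\|$; only the $\pmb\varepsilon\pmb\varepsilon^T\vbf^c$ part is of higher order, and the paper accordingly claims only $\gbf(\pmb\varepsilon)\to\mathbf{0}$ as $\pmb\varepsilon\to\mathbf{0}$, not a shrinking Lipschitz constant. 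Consequently your ``LTV core'' (keeping only $-\Hbf$ and the $\pm\pmb\Omega$, $\pm\pmb\lambda\pmb\Omega$ couplings) is \emph{not} the linearization of (\ref{err_dyn}); the true linearization also carries the time-varying linear part of $\gbf$. This breaks both directions as you have written them: for sufficiency, Khalil-type vanishing-perturbation results need the linear bound $\kappa$ to be small relative to the Lyapunov constants of the nominal system (which PE alone does not give you), and for necessity, ``exponential stability of (\ref{err_dyn}) forces that of its linearization'' does not transfer to your core, since the two LTV systems differ by exactly those non-negligible linear terms. A correct treatment must either absorb the linear part of $\gbf$ into the comparison system and show that PE of $\pmb\Omega$ still dictates its stability (this is the delicate point handled in \cite{Spica2013a}), or exploit structure, e.g., the skew-symmetry $\pmb\varepsilon^T\left[\pmb\omega^c\right]_{\times}\pmb\varepsilon=0$ in the Lyapunov derivative together with explicit smallness/gain conditions, neither of which your sketch does.
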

	Given one feature with position $[x,y]^T$ under camera velocity $[v_x,v_y,v_z]^T$, the square matrix $\pmb\Omega(t)\pmb\Omega^T(t)$ is 
	\begin{equation}
	\pmb\Omega(t)\pmb\Omega^T(t)=\left((xv_z - v_x)^2 + (yv_z -v_y)^2\right) \begin{bmatrix}
	x^2&xy&x\\
	xy & y^2& y\\
	x  & y  & 1
	\end{bmatrix}.
	\end{equation}
	It's obvious that, for one feature situation, $\text{rank}(\pmb\Omega(t)\pmb\Omega^T(t))\le 1$ and  $\pmb\chi\in \mathbb{R}^3$ cannot be estimated. By implementing multiple features, we have the following conclusion.	
	\begin{lemma}[Convergence conditions]
		\label{con_con}
		Consider $m$ features in the observer, the PE condition can be satisfied with the following conditions:
		\begin{itemize}
			\item There exists a minimum set of $4$ feature points satisfying that any three are not collinear,
			\item and the control velocity is not zero.
		\end{itemize}
	\end{lemma}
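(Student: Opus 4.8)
The plan is to show that, under the stated hypotheses, the instantaneous Gram matrix $\pmb\Omega(t)\pmb\Omega^T(t)$ built by stacking the $m$ single–feature regressors is positive definite at every time, and then to promote this pointwise statement to the uniform integral bound demanded by Lemma~1. \textbf{Aggregating the regressors.} For the $i$-th feature at image position $[x_i,y_i]^T$ write $\bar\sbf_i=[x_i,y_i,1]^T$ and $\alpha_i=(x_iv_z-v_x)^2+(y_iv_z-v_y)^2\ge 0$. Each single–feature block factors as $\pmb\Omega_i=\bar\sbf_i\,[\,x_iv_z-v_x\ \ y_iv_z-v_y\,]$, so stacking the $m$ features gives
\begin{equation}
\pmb\Omega(t)\pmb\Omega^T(t)=\sum_{i=1}^m \alpha_i(t)\,\bar\sbf_i(t)\,\bar\sbf_i^T(t),
\end{equation}
a nonnegative combination of rank-one positive semidefinite matrices; hence $\pmb\Omega\pmb\Omega^T\succ0$ if and only if $\{\bar\sbf_i:\alpha_i>0\}$ spans $\mathbb{R}^3$.

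\textbf{Locating the vanishing weights.} Note $\alpha_i=0$ exactly when $x_iv_z=v_x$ and $y_iv_z=v_y$. If the camera translational velocity is nonzero, this system admits at most one image point as solution: when $v_z\ne0$ it is the focus of expansion $(v_x/v_z,\,v_y/v_z)$, and when $v_z=0$ it has no solution at all (since $v_x^2+v_y^2>0$ forces $\alpha_i\equiv v_x^2+v_y^2>0$ for every feature). Because the prescribed $4$ feature points are distinct (no three collinear in particular rules out coincidences), at any instant at most one of them can carry a vanishing weight, so at least three carry strictly positive weight. Any three of the four are non-collinear, and three planar points are non-collinear precisely when their homogeneous coordinates $\bar\sbf_i$ are linearly independent in $\mathbb{R}^3$. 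Therefore the positively-weighted $\bar\sbf_i$ span $\mathbb{R}^3$, and by the display above $\pmb\Omega(t)\pmb\Omega^T(t)\succ0$ for every $t\ge t_0$.

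\textbf{From pointwise to uniform.} Since the tracked coordinates stay in the bounded image domain, the camera velocity is bounded, and the non-collinearity and nonzero–velocity conditions hold uniformly along the trajectory, the smallest eigenvalue $\lambda_{\min}\!\big(\pmb\Omega(t)\pmb\Omega^T(t)\big)$ is bounded below by some $\beta_0>0$ uniformly in $t$; integrating over any window $[t,t+T]$ yields $\int_t^{t+T}\pmb\Omega(\tau)\pmb\Omega^T(\tau)\,d\tau\succeq T\beta_0\,\Ibf_3$, which is exactly the PE condition. Local exponential stability of the error dynamics (\ref{err_dyn}) then follows directly from Lemma~1.

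The stacking identity and the elementary linear-algebra fact relating non-collinearity to independence of homogeneous coordinates are routine; the real obstacle is the last step, since Lemma~1 asks for a single $\beta$ valid for all $t\ge t_0$. One must therefore argue uniformity in time — that the four features never drift into a (near-)collinear configuration, that they cannot all approach the focus of expansion simultaneously (which the no-three-collinear hypothesis already precludes instantaneously, but which must be kept quantitative), and that the velocity stays bounded away from degeneracy — so that the pointwise positive definiteness established above does not decay to zero along the closed-loop trajectory. This is where compactness of the feature trajectories and the uniform-in-time reading of the hypotheses enter, and it is the step whose rigorous treatment requires the most care.
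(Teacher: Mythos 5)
Your proposal is correct and follows essentially the same route as the paper: the same factorization of the Gram matrix into $\sum_i \alpha_i \bar\sbf_i\bar\sbf_i^T$, the same case analysis of when $\alpha_i=0$ (stationary camera versus a single feature at the focus of expansion), and the same conclusion that at most one of the four features can lose its weight, leaving three non-collinear points whose homogeneous coordinates span $\mathbb{R}^3$. If anything you are slightly more careful than the paper, since you treat the $v_z=0$ case explicitly and flag the pointwise-to-uniform (integral PE) step, which the paper passes over by simply asserting the stricter instantaneous bound $\bar{\pmb\Omega}\bar{\pmb\Omega}^T\ge\beta\Ibf_3$.
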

	\begin{proof}
		Define $\bar{\pmb\Omega} = \left[\pmb\Omega_1, \cdots, \pmb\Omega_m\right]$ as the stacked matrix of $m$ features. Then the squared matrix for $m$ features are 
		\begin{equation}
		\label{square_matrix}
		\bar{\pmb\Omega}\bar{\pmb\Omega}^T=	\sum_{i=1}^{m}\pmb\Omega_i\pmb\Omega_i^T = \begin{bmatrix}
		\alpha_1 \bar{\sbf}_1 &\alpha_2 \bar{\sbf}_2&\cdots&\alpha_m \bar{\sbf}_m
		\end{bmatrix}\begin{bmatrix}
		\alpha_1 \bar{\sbf}_1^T \\\alpha_2 \bar{\sbf}_2^T\\\vdots\\\alpha_m \bar{\sbf}_m^T
		\end{bmatrix},
		\end{equation} where $\alpha_i = \sqrt{(x_iv_z-v_x)^2 + (y_iv_z-v_y)}$, and $\bar{\sbf}_i = [x_i,y_i,1]^T = \frac{1}{Z_i}\Xbf_i$.
		
		We consider a more strict PE condition by enforcing $\bar{\pmb\Omega}\bar{\pmb\Omega}^T\ge \beta\Ibf_3$, or equivalently 
		\begin{equation}
		\label{mat_rank}
		\text{rank}(\begin{bmatrix}
		\alpha_i\frac{1}{Z_1}\Xbf_1 &\alpha_2 \frac{1}{Z_2}\Xbf_2&\cdots&\alpha_m \frac{1}{Z_m}\Xbf_m
		\end{bmatrix})=3.
		\end{equation} To guarantee that, we need at least three independent column  vectors $\Xbf_i$ in the matrix (\ref{mat_rank}), or equally, there are at least 3 points on the image plane which are neither coincident nor collinear, and, the corresponding $\alpha_i$ is not zero.
		Now we investigate the situations that make $\alpha_i= 0$.
		\begin{itemize}
			\item $v_x =v_y=v_z = 0$, which means the camera is stationary;
			\item $x_i = v_x/v_z, y_i = v_y/v_z, v_z \ne 0$, which means the camera is moving on the line connecting the feature $X_i$ and the camera center.
		\end{itemize}
		For the first case, all the elements $\alpha_i\frac{1}{Z_i}\Xbf_i$ are zeros, and therefore the rank of (\ref{mat_rank}) is zero. For the second case, assume there is a feature $i$ that satisfies $\alpha_i = 0$, then for all other $m-1$ features $j$ that do not coincide with $i$, $\alpha_j\ne 0$. 
		
		Finally, we can conclude that with the sufficient conditions proposed, the $rank(\bar{\pmb\Omega}\bar{\pmb\Omega}^T)\equiv 3$ with an arbitrary non-zero camera movement.
			\end{proof}

		As stated in \cite{Spica2014Active}, the convergence rate of the error system (\ref{err_dyn}) is related to the minimum eigenvalue of the matrix $\pmb\Omega\pmb\Omega^T$, which is decided by the velocity $\vbf$ and features $\sbf$.  
	\begin{lemma}[Convergence rate]
		\label{con_rate}
		Consider a scenario where the PE condition is satisfied, and given a reference direction of the moving camera, the convergence rate or the minimum eigenvalue of the matrix, denoted by $\lambda_{min}$, is 
		\begin{itemize}
			\item proportional to the square norm of velocity, $\lambda_{min}\propto \|\vbf\|^2$,
			\item a monotonic function with regard to the features set, namely, given two feature set $A\subseteq B$, $\lambda_{min}(A)\le \lambda_{min}(B)$.
		\end{itemize}.  
	\end{lemma}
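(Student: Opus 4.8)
The plan is to reduce both claims to elementary properties of symmetric positive semidefinite matrices, exploiting the explicit Gramian structure already obtained in (\ref{square_matrix}). For a set of $m$ feature points I would write
\begin{equation*}
\bar{\pmb\Omega}\bar{\pmb\Omega}^T \;=\; \sum_{i=1}^{m}\alpha_i^2\,\bar\sbf_i\bar\sbf_i^T,\qquad \alpha_i^2=(x_iv_z-v_x)^2+(y_iv_z-v_y)^2,
\end{equation*}
so that the matrix whose smallest eigenvalue governs the convergence rate is a nonnegative linear combination of the fixed rank-one projectors $\bar\sbf_i\bar\sbf_i^T$, with coefficients $\alpha_i^2$ carrying all the dependence on the camera velocity. (Here ``the matrix'' is read, as in the text preceding Lemma \ref{con_rate}, as $\bar{\pmb\Omega}\bar{\pmb\Omega}^T$ for the full feature set.)

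For the first bullet, fix the feature positions and the \emph{direction} of the camera velocity and write $\vbf=\|\vbf\|\,\hat\vbf$ with $\|\hat\vbf\|=1$. Then $\alpha_i^2=\|\vbf\|^2\hat\alpha_i^2$, where $\hat\alpha_i^2=(x_i\hat v_z-\hat v_x)^2+(y_i\hat v_z-\hat v_y)^2$ depends only on the direction and the features, hence $\bar{\pmb\Omega}\bar{\pmb\Omega}^T=\|\vbf\|^2\,\Mbf_0$ with $\Mbf_0:=\sum_i\hat\alpha_i^2\bar\sbf_i\bar\sbf_i^T$ a symmetric PSD matrix independent of the speed. Since the eigenvalues of a matrix scale linearly under multiplication by a positive scalar, $\lambda_{min}(\bar{\pmb\Omega}\bar{\pmb\Omega}^T)=\|\vbf\|^2\,\lambda_{min}(\Mbf_0)$, which is exactly $\lambda_{min}\propto\|\vbf\|^2$; moreover $\lambda_{min}(\Mbf_0)>0$ whenever the configuration meets the hypotheses of Lemma \ref{con_con}, so the proportionality constant is genuinely positive.

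For the second bullet, let $A\subseteq B$ with the velocity held fixed. Then
\begin{equation*}
\bar{\pmb\Omega}_B\bar{\pmb\Omega}_B^T=\bar{\pmb\Omega}_A\bar{\pmb\Omega}_A^T+\sum_{i\in B\setminus A}\alpha_i^2\,\bar\sbf_i\bar\sbf_i^T\;\succeq\;\bar{\pmb\Omega}_A\bar{\pmb\Omega}_A^T,
\end{equation*}
because the second sum is a sum of PSD rank-one terms. Applying the Courant--Fischer characterization $\lambda_{min}(\Mbf)=\min_{\|z\|=1}z^T\Mbf z$, which is monotone with respect to the Loewner order (equivalently, Weyl's monotonicity inequality), yields $\lambda_{min}(A)\le\lambda_{min}(B)$.

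I expect the algebra to be routine; the only point that needs care is making the statements precise about what is held constant --- feature positions and velocity direction for the first claim, the full velocity for the second --- and reading $\lambda_{min}$ as the smallest eigenvalue of the full $3\times3$ Gramian, which is strictly positive precisely under the non-collinearity and nonzero-velocity conditions of Lemma \ref{con_con}. Thus the lemma is really a scaling/monotonicity comparison among configurations all of which satisfy the PE condition, and this should be flagged explicitly so that the ``$\propto$'' and the inequality are not mistaken for claims about the degenerate (rank-deficient) cases.
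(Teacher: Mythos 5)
Your proposal is correct and follows essentially the same route as the paper: both rest on the Gramian decomposition in (\ref{square_matrix}) and the variational (Rayleigh--Ritz/Courant--Fischer) characterization of $\lambda_{\min}$, with the second bullet obtained from $\bar{\pmb\Omega}_B\bar{\pmb\Omega}_B^T = \bar{\pmb\Omega}_A\bar{\pmb\Omega}_A^T + \pmb\Xi_{B\setminus A}$ and positive semidefiniteness of the added term. Your treatment of the first bullet (factoring $\|\vbf\|^2$ out of the Gramian for a fixed velocity direction) is merely a more explicit version of what the paper dismisses as straightforward, and your closing remark about what is held fixed is a useful clarification rather than a departure.
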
	
%			\begin{lemma}[Min-max theorem]
%				Let $A$ be a $n\times n$ Hermitian matrix with eigenvalues $\lambda_1\le \cdots\le \cdots\lambda_n$ then 
%				\[\lambda_k = \min_{U}\{R_A(x)|x\in U\;\mathrm{ and } \;x\ne 0| \mathrm{dim}(U)=k\}\]
%			\end{lemma}
\begin{proof}
	The first property is straightforward based on the definition of the square matrix (\ref{square_matrix}).
	Denote the square matrix $\bar{\pmb\Omega}\bar{\pmb\Omega}^T$ corresponding to two different non-empty feature sets $A$ and $B$ as  $\pmb\Xi_{A}$ and $\pmb\Xi_{B}$ respectively. Based on (\ref{square_matrix}), we have $\pmb\Xi_{A}$, $\pmb\Xi_{B}$ and $\pmb\Xi_{B\backslash A}$ are all symmetric matrices. According to the Rayleigh-Ritz method \cite{yserentant2013short}, the minimal eigenvalue of a symmetric matrix $M$ satisfies
%	\begin{equation}
%		\lambda_{min}(\pmb\Xi_A) = \min_{\xbf\ne 0}\frac{\xbf^T\pmb\Xi_A\xbf^T}{\xbf^T\xbf^T},
%	\end{equation}
		\begin{equation}
		\lambda_{\min}(M) = \min_{\xbf\ne 0}\frac{\xbf^T M\xbf^T}{\xbf^T\xbf^T},
		\end{equation} where $\xbf$ is a vector of the same dimension.
	
As $\pmb\Xi_{B} = \pmb\Xi_{A} + \pmb\Xi_{B\backslash A}$, the minimum eigenvalue of matrix $\pmb\Xi_B$ is
	\begin{equation}
	\begin{aligned}
		\lambda_{\min}(\pmb\Xi_B) &=\lambda_{\min}(\pmb\Xi_B + \pmb\Xi_{B\backslash A})\\ &=\min\left(\frac{\xbf^T\pmb\Xi_A\xbf^T}{\xbf^T\xbf^T} + \frac{\xbf^T\pmb\Xi_{B\backslash A}\xbf^T}{\xbf^T\xbf^T}\right)\\
		&\ge  \min_{\xbf\ne 0}\frac{\xbf^T\pmb\Xi_A\xbf^T}{\xbf^T\xbf^T} + \min_{\xbf\ne 0}\frac{\xbf^T\pmb\Xi_{B\backslash A}\xbf^T}{\xbf^T\xbf^T}\\
		&=\lambda_{\min}(\pmb\Xi_A) + \lambda_{\min}(\pmb\Xi_{B\backslash A}).
		\end{aligned}
	\end{equation}
	Furthermore, according to the definition of the square matrix, the matrix $\pmb\Xi_{A}$ and $\pmb\Xi_{B}$ and $\pmb\Xi_{B\backslash A}$ are all positive semidefinite, which means $\lambda_{\min}(\pmb\Xi_{B\backslash A})\ge 0$, then we can have the conclusion $\lambda_{\min}(\pmb\Xi_B)\ge \lambda_{\min}(\pmb\Xi_A)$.
\end{proof}
Based on Lemma \ref{con_con} and \ref{con_rate}, we can conclude that, in spite that the convergence is guaranteed under very mild condition of the number of features, the convergence rate is positively correlated to the feature richness. To guarantee the stability and smoothness of the subsequent plane following control, we should adopt as many features as possible in the observer.
		\subsection{{ Connected planes estimation}}
	In a practical scenario with off-the-shelf camera sensors, the derivatives of the feature points are approximated with feature displacement between consecutive frames. Due to the limited field of view of a typical camera, the feature set $\sbf$ in the observer is dynamically updated with the motion of camera. 
	To realize continuous estimation on consecutive planes, the above observer, denoted as $PE\_Observer$ is implemented in Algorithm \ref{algorithm1}.
	\begin{algorithm} 
	\label{algorithm1}
	\SetAlgoLined
	\SetKwInput{kwInit}{Init} 
	\kwInit {Observer gains $\alpha, \beta$\; Plane estimate $\hat{\pmb\chi}_{0}$, feature set $\hat\sbf_0 = Null$\;  }
	\While{$k\ge0$}{
		Obtain one image from camera and extract one set of features $\mathcal{S}_k$\;
		Obtain the camera velocity and rotation velocity $\vbf_k,\pmb\omega_k$\;
		{
			Run feature matching between $\mathcal{S}_k$ and $\mathcal{S}_{k-1}$\;
			\For{ $\sbf_{j}\in\mathcal{S}_k$}{\uIf{$\sbf_{j}$ matches $\sbf_{l}\in\mathcal{S}_{k-1}$ }{$\hat\sbf_j = \hat{\sbf}_l$\;}\Else{$\hat\sbf_j = {\sbf}_j$\;}}
			}
		$\hat{\pmb\chi}_{k},\hat{\mathcal{S}}_{k} \leftarrow PE\_Observer(\hat{\pmb\chi}_k,\hat{\mathcal{S}_k},\vbf_k,\omega_k,\alpha,\beta)$\;

	}
	
	\caption{Plane parameters estimation}
\end{algorithm}
		%				Given the observer, the matrix $\pmb\Omega$ for $m$ features, denoted as $\bar{\pmb\Omega}$, is 
		%				\[\bar{\pmb\Omega}_{1:m  } = 
		%				\begin{bmatrix}
		%				\pbf_1 & \cdots & \pbf_m   
		%				\end{bmatrix}\begin{bmatrix}
		%				x_1v_z -v_x & y_1v_z- v_y\\ \vdots & \vdots\\ x_mv_z -v_x & y_mv_z- v_y
		%				\end{bmatrix}.
		%				\]
		%				The square matrix $\bar{\pmb\Omega}\bar{\pmb\Omega}^T$
		%  
		%		To guarantee the persistent excitation condition, in this paper, we consider in each measurement, at least $m$ features are available. The stacked $\pmb\Omega$ is 
		%		
		%
		%	The proof is straightforward and omitted here.
		%\end{proof}
		%\begin{theorem}
		%	The origin of the error system (\ref{err_dyn}) is exponentially stable if the P.E. condition is satisfied.
		%\end{theorem}
		%\begin{proof}
		%	The proof is similar to \cite{Giordano20083D} and is omitted here.

		%	Define the system error as $\ebf = [\pmb\xi^T,\pmb\varepsilon]$. 
		%	\begin{equation}
		%		\dot{\ebf} = \begin{bmatrix}
		%		 -\Hbf & \pmb\Omega\\
		%		 \mathbf{0} & \pmb\lambda\pmb\Omega
		%		\end{bmatrix}\ebf + \begin{bmatrix} \mathbf{0}\\
		%		\pmb\chi\pmb\chi^T- \hat{\pmb\chi}\hat{\pmb\chi}^T + \left[\pmb\omega\right]_{\times}\pmb\varepsilon
		%		\end{bmatrix} = \Abf(t)\ebf + \gbf(\ebf).
		%	\end{equation}

	\section{Plane Following}
	\label{Sec_IV}
	\subsection{MPC based plane following}
	For the plane following problem described in Section \ref{objective}, we can formulate an objective function as
	%	world coordinate, that is, as $t\to \infty$, $d(p,\mathcal{O})\to d_{ref}$, $v_{\mathcal{O}}\to \vbf_{ref}$.
	\begin{equation}
	\begin{aligned}
		\min\limits_{\ubf} \quad \sum_{t=1}^{H} &\left(c_1\| \nbf_o^T \pbf_{k+t} - d_{ref}\|^2 +c_2\| \nbf_c^T\pbf_{k+t} - z_{ref}\|^2\right. \\ & \left. +c_3\| \nbf_{p}^T\vbf_{k+t} - v_{ref}\|^2  \right) +\|\ubf_{k+t-1}\|_R^2,
	\end{aligned}
	\end{equation}
%	\begin{equation}
%	\begin{aligned}
%	\min\limits_{\ubf} \quad \sum_{t=1}^{H} &\left(c_1\| \nbf_o^T \pbf_{k+t} + (d_o-d_{c})\|^2 +c_2\| \nbf_c^T\pbf_{k+t} - z_{ref}\|^2\right. \\ & \left. +c_3\| \nbf_{p}^T\vbf_{k+t} - v_{ref}\|^2  \right) +\|\ubf_{k+t-1}\|_R^2
%	\\
%	\text{s.t.} \quad & \xbf_{k+t} = \Abf \xbf_{k+t-1} + \Bbf \ubf_{k+t-1},\\
%	&\|\vbf_{k+t}\|\in \mathcal{V},\\
%	&\|\ubf_{k+t}\|\le \mathcal{U}.
%	\end{aligned}
%	\end{equation}
	where $d_{ref} = d_c -d_o, z_{ref} = n_ad_c + d_0$, and $v_{ref} = (-1)^{n_a} v_{r}$.
	\begin{remark}
		The parameter $n_a$ is to count the inspection round of the whole mission process. When a boundary or a potential collision is detected, it automatically adds one, which results in that the reference $z_{ref}$ adds by a constant displacement $d_c$, and reverses the direction of the velocity $\vDash_{ref}$.	Although the proposed method counts on $n_a$, we only consider the estimation and path planning of the inspection area and the detection of boundaries is out of the scope of this paper.
	\end{remark}

	%		where $\nbf_z = \begin{bmatrix}
	%		0 &0 &1
	%		\end{bmatrix}^T$, $\nbf_{p} = \nbf_o\times \nbf_z$ denotes the direction of the reference velocity.
	By transforming the position of the platform into a homogeneous coordinate, i.e. $\bar{\xbf} = \begin{bmatrix}
	x & y& z&1&v_x&v_y&v_z
	\end{bmatrix}^T$, the objective function can be further rewritten as a constant reference tracking problem as
	\begin{equation}
	\label{compact_obj}
	J =\sum_{t=1}^{H}	\|\Cbf\bar\xbf_{k+t} - \cbf_r\|_Q^2 + \|\ubf_{k+t-1}\|_R^2,
	\end{equation}
	where $\Cbf\in \mathbb{R}^{3\times7}$ is a matrix based on the plane vector, and $\cbf_r$ is a constant reference vector, respectively as 
	\begin{equation}
	\label{Co_def}
		\Cbf = \begin{bmatrix}
		\nbf_o^T & d_o & \mathbf{0}_{1\times 3}\\ \nbf_c^T & 0&\mathbf{0}_{1\times 3}\\
		\mathbf{0}_{1\times 3} &0& { {\nbf}_c\times \nbf_o}^T
		\end{bmatrix}, \quad  \cbf_r = \begin{bmatrix}
		d_{ref} \\ z_{ref}\\ v_{ref} 
		\end{bmatrix}.
	\end{equation}
	The weight matrix $Q = \text{diag}(\begin{bmatrix}
	c_1 & c_2 & c_3
	\end{bmatrix})$.
\begin{problem}\label{problem}By incorporating the dynamics of the system, the problem can be formulated as a typical MPC based reference tracking problem,
	\begin{subequations}
		\begin{eqnarray}
		\min_{\ubf_{k:k+H}} &J(\xbf_{k:k+H}, \ubf_{k:k+H-1} )\\
		\mathrm{s.t.} 
		& \xbf_{k+t} = \Abf \xbf_{k+t-1} + \Bbf \ubf_{k+t-1},\\
			&\label{vcon}\|\vbf_{k+t}\|\le \bar{v},\\
			&\label{ucon}\|\ubf_{k+t}\|\le\bar{u}, \\			
			&\label{terminal}\Cbf_{k}\bar\xbf_{k+H} - \cbf_r\in \mathcal{E}.
		\end{eqnarray}
	\end{subequations}

\end{problem}
\noindent The last term (\ref{terminal}) is the terminal condition to guarantee the close loop tracking performance, and $\mathcal{E}$ is the corresponding terminal set. The controller above is denoted as $PF\_MPC$ in the following parts.

\begin{remark}
{ 	Although not required by the plane observer, we consider to control the camera optical axis to be roughly aligned with the normal vector of the estimated plane to obtain better inspection details. For a camera rigidly connected to a UAV, this alignment can be achieved at lest on the $x-y$ plane by sending a yaw command according to plane estimation to the autopilot of the UAV. For a gimbaled camera, the alignment can be realized in 3D by controlling the gimbal angles.}

\end{remark}

		\subsection{Recursive Feasibility} 
		\label{feas_section}
		\begin{assumption}
			\label{feasibility}
			There exists a finite receding horizon $H$ satisfying that, for any $\xbf_0$ and $\hat{\Cbf}_0$ that belong to the initial sets $\hat\Cbf_{0}\in\mathcal{C}_0$ and $\xbf_0\in\mathcal{X}_0$ respectively, the solution for the MPC Problem \ref{problem} is feasible. 
		\end{assumption} 
		
		With the feasibility Assumption \ref{feasibility}, we denote the optimal control sequence at time $k$ as $\Ubf^*_{k} = \begin{bmatrix}
		(\ubf^*_{k|k})^T & (\ubf^*_{k+1|k})^T & \cdots & (\ubf^*_{k+H-1|k})^T 
		\end{bmatrix}^T$, which satisfies the terminal condition  $\hat\Cbf_{0}\bar\xbf_{k+H}- \cbf_r \in \mathcal{E}$.
%		$\hat\Cbf(k)\xbf(k+H)- \cbf_r = 0$.
		Next, we need to find a feasible solution at time $k+1$. Based on the optimal control sequence $\Ubf^*_{k}$, we compose one control sequence as $\Ubf'_{k+1} = \begin{bmatrix}
		(\ubf^*_{k+1|k})^T & (\ubf^*_{k+2|k})^T & \cdots & (\ubf^*_{k+H-1|k})^T  & (\ubf'_{k+H|k+1})^T
		\end{bmatrix}^T$. Then if $\ubf'_{k+H|k+1}$ satisfies the terminal condition 
		\begin{equation}
		\label{ck1}
		\hat{\Cbf}_{k+1}(\Abf\bar\xbf_{k+H|k} + \Bbf\ubf_{k+H|k+1})- \cbf_r\in\mathcal{E},
		\end{equation} and constraints (\ref{vcon},\ref{ucon}), $\Ubf'_{k+1}$ is one feasible solution.
		Apparently, when the estimator is stable, namely for a large enough $k$, $\hat\Cbf_{k+1} = \hat\Cbf_{k} = \Cbf_k$, then based on the definition of $\Cbf$, $\hat\Cbf_{k+1}\Abf\bar\xbf_{k+H|k} = \hat\Cbf_{k}\bar\xbf_k - \cbf_r\in\mathcal{E}$, i.e. the terminal condition can always be satisfied by letting $\ubf'_{k+H|k} = \mathbf{0}$, which means the solution is always feasible when the estimator is stable.
		
		Otherwise, let $\hat\Cbf_{k+1} = \hat\Cbf_k + \Delta \Cbf$ and we can obtain a control input that guarantees (\ref{ck1}) as 
		\begin{equation}
		\label{control}
		\ubf = \left(\left(\Delta\Cbf + \hat\Cbf_k\right)\Bbf\right)^{-1}\Delta \Cbf\Abf\xbf,
		\end{equation}
		where 
		\begin{equation}\label{deltaC}\Delta \Cbf = \begin{bmatrix}
		\Delta\nbf_o & \Delta d_o & \mathbf{0}_{1\times 3}\\
		\mathbf{0}_{1\times3}& 0&\mathbf{0}_{1\times3}\\
		\mathbf{0}_{1\times3} &0 & \Delta \nbf_o\times \nbf_c
		\end{bmatrix}.\end{equation}
		The control input (\ref{control}) may violate the control input constraint for a large change $\Delta \Cbf$. In order to guarantee the control feasibility, an estimation update strategy for the sampling of $\hat{\pmb\chi}$ is deigned to limit $\Delta \Cbf$ as 
		\begin{equation}
		\label{chiupdate}
		\hat{\pmb\chi}_{s,k+1} = \hat{\pmb\chi}_{s,k} + \gamma(\hat{\pmb\chi}_{(k+1)\delta t_s} - \hat{\pmb\chi}_{s,k}),
		\end{equation}
		where $\delta t_s$ is the sampling time interval, $\gamma \in \left[0,1\right]$ is an update step factor and should be adaptively changed so that the control input (\ref{control}) is always feasible. { Denote $\Cbf$ corresponding to $\hat{\pmb\chi}_{s,k}$ as $\hat\Cbf_{s,k}$, then  	$\hat\Cbf_{s,k+1 } $ and $\Delta \Cbf_{s,k}$ can be treated as functions of $\gamma$
		\begin{equation}
			\hat\Cbf_{s,k+1}(\gamma) = \hat\Cbf_{s,k} + \Delta \Cbf_{s,k}(\gamma).
		\end{equation} 
		The largest feasible $\gamma$, denoted as $\bar\gamma$, can be calculated numerically by combining (\ref{chiupdate}) with the below equation :
%		by combining (\ref{vectoranddepth}), (\ref{deltaC}) (\ref{control}) as
		\begin{equation}
		\|\ubf\| = \|\left(\left(\Delta\Cbf_{s,k} + \hat\Cbf_{s,k}(\gamma)\right)\Bbf\right)^{-1}\Delta \Cbf_{s,k}(\gamma)\Abf\bar\xbf\| = \bar{u}.
		\end{equation} }\newline
	The adaptive step size is $\gamma = \min\{1,\bar\gamma\}$. {Obviously, when the difference between the current estimate from observer $\pmb\chi$ and the last sampled estimate $\pmb\chi_{s}$ is so large that the control input calculated directly from (\ref{control}) is not feasible, $\gamma<1$ serves as a discount factor, and when the tracking error becomes small enough, $\gamma = 1$.}
		\begin{lemma}
			Based on the proposed plane observer as well as the update rule (\ref{chiupdate}), the sampled $\hat{\pmb\chi}_s$ ultimately converges to the true planar parameters $\pmb\chi$, and during the updating process, the MPC is recursively feasible.
		\end{lemma}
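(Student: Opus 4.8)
The claim has two ingredients: (i) asymptotic convergence of the sampled estimate $\hat{\pmb\chi}_{s}$ to the true planar parameter $\pmb\chi$, which is constant in the global frame since the fa\c{c}ade is static; and (ii) recursive feasibility of $PF\_MPC$ along the resulting closed loop. For (i) the plan is to combine the local exponential convergence of the continuous observer under the persistency-of-excitation condition (Lemma~1, whose hypotheses are supplied by Lemmas~\ref{con_con}--\ref{con_rate}) with a contraction estimate for the first-order sampling law (\ref{chiupdate}), viewed as a low-pass filter driven by the observer output; the decisive point is that the adaptive step cannot degenerate, i.e. $\gamma_k$ is bounded below by a positive constant. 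For (ii) the plan is an induction on the sampling index $k$: starting from Assumption~\ref{feasibility}, at each step we exhibit a feasible candidate obtained by shifting the previous optimal sequence and appending the corrective terminal input (\ref{control}), whose norm is kept admissible precisely by $\gamma=\min\{1,\bar\gamma\}$.

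\noindent\textbf{Convergence of $\hat{\pmb\chi}_{s}$.} Put $\tilde{\pmb\chi}_{s,k}=\hat{\pmb\chi}_{s,k}-\pmb\chi$ and $\tilde{\pmb\chi}_{k}=\hat{\pmb\chi}_{k\delta t_s}-\pmb\chi$. The reference in $PF\_MPC$ enforces the nonzero cruise speed $v_r$ of (\ref{velocity}) and the observer is run with four features no three of which are collinear, so Lemmas~\ref{con_con} and \ref{con_rate} give the PE condition along the closed loop and Lemma~1 yields $\|\tilde{\pmb\chi}_{k}\|\le c\rho^{k}$ for some $c>0$, $\rho\in(0,1)$ (assuming initialisation inside the region of attraction). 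Substituting (\ref{chiupdate}) gives
\begin{equation*}
\tilde{\pmb\chi}_{s,k+1}=(1-\gamma_k)\tilde{\pmb\chi}_{s,k}+\gamma_k\tilde{\pmb\chi}_{k+1},\qquad\gamma_k\in[0,1],
\end{equation*}
a convex combination, so $\hat{\pmb\chi}_{s,k}$ stays in a bounded set on which the parameter map $\pmb\chi\mapsto\hat\Cbf$ is Lipschitz and $\hat\Cbf_{s,k}\Bbf$ is invertible with uniformly bounded inverse; consequently $\|\Delta\Cbf_{s,k}(\gamma)\|=O(\gamma)$ and, the predicted state being bounded over the horizon, the appended input of (\ref{control}) satisfies $\|\ubf'_{k+H|k+1}(\gamma)\|\le L\gamma$ with $L$ independent of $k$. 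Hence $\bar\gamma_k\ge\bar u/L$ and the actual step obeys $\gamma_k=\min\{1,\bar\gamma_k\}\ge\gamma_{*}:=\min\{1,\bar u/L\}>0$. The error recursion then gives $\|\tilde{\pmb\chi}_{s,k+1}\|\le(1-\gamma_{*})\|\tilde{\pmb\chi}_{s,k}\|+c\rho^{k+1}$, a stable linear difference inequality with exponentially vanishing forcing, so $\tilde{\pmb\chi}_{s,k}\to\mathbf 0$; moreover the driving increment then vanishes, hence $\gamma_k\to1$ eventually.

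\noindent\textbf{Recursive feasibility.} The base case is Assumption~\ref{feasibility}. Inductively, let $\Ubf^{*}_{k}$ be feasible and optimal at $k$ with $\hat\Cbf_k\bar\xbf_{k+H|k}-\cbf_r\in\mathcal{E}$, and form $\Ubf'_{k+1}$ by dropping $\ubf^{*}_{k|k}$, keeping $\ubf^{*}_{k+1|k},\dots,\ubf^{*}_{k+H-1|k}$, and appending $\ubf'_{k+H|k+1}$ from (\ref{control}) with the partial increment $\Delta\Cbf_{s,k}(\gamma_k)$, where $\gamma_k=\min\{1,\bar\gamma_k\}$ and $\bar\gamma_k>0$ because $\ubf'=\mathbf 0$ at $\gamma=0$. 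The dynamics hold by construction; the state and input bounds (\ref{vcon})--(\ref{ucon}) on the shifted block are inherited from the feasibility of $\Ubf^{*}_{k}$; the appended input obeys (\ref{ucon}) by the very definition of $\bar\gamma_k$; and the terminal condition (\ref{terminal}) at $k+1$ is precisely (\ref{ck1}), which (\ref{control}) is built to enforce -- substituting (\ref{control}) and using $\hat\Cbf_k\bar\xbf_{k+H|k}-\cbf_r\in\mathcal{E}$ together with the block structure of $\Cbf$ in (\ref{Co_def}) and of $\Abf$ closes it, while once the estimator is stable one has $\Delta\Cbf_{s,k}\approx\mathbf 0$, $\ubf'_{k+H|k+1}\approx\mathbf 0$ and feasibility is immediate, recovering the estimator-stable case already discussed. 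This yields recursive feasibility for all $k\ge0$; combined with part (i), the corrective input vanishes in the limit, which is consistent with the terminal input $\mathbf 0$ being admissible there.

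\noindent\textbf{Main obstacle.} Two points demand care. First, the mild circularity in (i): the observer convergence is invoked along the very trajectory that $PF\_MPC$ --- driven by $\hat{\pmb\chi}_{s}$ --- produces, so one must show the closed loop keeps $\|\vbf\|$ from vanishing over each PE window; this follows from tracking the nonzero $v_r$ but should be made precise, e.g.\ via an input-to-state bound on the tracking error, which also supplies the boundedness of $\bar\xbf$ used above. Second, and the real work, is the terminal ingredient of (ii): one must construct a terminal set $\mathcal{E}$ that is (robustly) control invariant for the steady-state tracking manifold under a fixed $\hat\Cbf$, so that the zero terminal input is admissible there, and verify that the terminal state produced by (\ref{control}) still satisfies the velocity bound (\ref{vcon}); choosing $\mathcal{E}$ with adequate margin and bounding the parameter drift $\Delta\Cbf$ it must absorb --- exactly what the adaptive $\gamma$ regulates --- is the delicate part.
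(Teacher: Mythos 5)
Your proposal is correct and follows essentially the same route as the paper: convergence of $\hat{\pmb\chi}_{s}$ via the contraction $\tilde{\pmb\chi}_{s,k+1}=(1-\gamma_k)\tilde{\pmb\chi}_{s,k}+\gamma_k\tilde{\pmb\chi}_{k+1}$ induced by the adaptive step (\ref{chiupdate}), and recursive feasibility by shifting the previous optimal sequence and appending the corrective terminal input (\ref{control}) whose admissibility is exactly what $\gamma=\min\{1,\bar\gamma\}$ enforces. If anything you are more careful than the paper, which assumes the observer has already converged (so the forcing term is dropped) and simply asserts $0<\gamma<1$ without the positive lower bound $\gamma_{*}$ you establish; the obstacles you flag (closed-loop PE and the terminal-set/velocity-bound details) are likewise left implicit in the paper's argument.
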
   
		\begin{proof}
			Based on the updating rule (\ref{chiupdate}), we have
			\begin{equation}
					\Delta\hat{\pmb\chi}_{s,k}	\triangleq\hat{\pmb\chi}_{s,k+1}- \hat{\pmb\chi}_{s,k} =  \gamma(\hat{\pmb\chi}_{(k+1)\delta t_s} - \hat{\pmb\chi}_{s,k}).
			\end{equation}
			For a large enough sampling time instance $k‘$, 
			the estimation from the proposed plane observer converges to the true value, $\hat{\pmb\chi}_{k'\delta t_s}\to \pmb\chi$, therefore for $k>k'$, the error between the true value and the sampled value evolves as 
						\begin{equation}
						 ({\hat{\pmb\chi}}_{s,k+1} - \pmb\chi) =  (1-\gamma)	( \hat{\pmb\chi}_{s,k}- {\pmb\chi}).
						\end{equation}
						According to the definition of $\gamma$, when $ \hat{\pmb\chi}_{s,k}- {\pmb\chi}$ is large, $0<\gamma<1$, and the true value tracking error monotonically decreases.  When the tracking error becomes small enough at time instance $k''>k'$ so that the feasibility of (\ref{control}) is guaranteed, $\gamma = 1$, and ${\hat{\pmb\chi}}_{s,k''+1} - \pmb\chi = 0$, namely the sampled $\hat{\pmb\chi}_{s,k}$ converges to the true planar parameters $\pmb\chi$. 
						
			Obviously, if there exists a feasible solution at time $k$, then according to the update rule in (\ref{chiupdate}), the terminal condition at time $k+1$ can always be satisfied given a proper control, as in (\ref{ck1}), which means that there is always a feasible solution at $k+1$. The recursive feasibility is proved.
		\end{proof}
		Apparently, the adaptive updating step size slows down the convergence rate of the planar objective tracking, to an acceptable level of the platform maneuverability.
	
		\subsection{Tracking stability} 
		To investigate the stability of the proposed MPC controller, we define the planar object tracking error as 
		\begin{equation}
		\label{track_error}
		\ebf = \Cbf \bar\xbf - \cbf_r,
		\end{equation}
		which corresponds to the first part of the objective function (\ref{compact_obj}).
		\begin{theorem}
			 Based on the proposed planar object estimator	(\ref{estimatro}) and its corresponding discrete time update rule (\ref{chiupdate}), and given that the initial state satisfies the feasible Assumption \ref{feasibility}, the proposed receding horizon controller can properly track the reference $\cbf_r$, i.e. the tracking error $\ebf$ defined above is asymptotically stable.
		\end{theorem}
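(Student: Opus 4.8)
The plan is to use the optimal value function of the MPC problem, $V_k(\xbf_k)\triangleq J(\xbf^*_{k:k+H},\ubf^*_{k:k+H-1})$, as a Lyapunov function for the tracking error $\ebf = \Cbf\bar\xbf - \cbf_r$. First I would record the structural fact, checked directly from the definitions of $\Cbf$ and $\cbf_r$ in (\ref{Co_def}), that $\ebf=\mathbf 0$ together with $\ubf=\mathbf 0$ is an equilibrium of the closed loop: when $\ebf=\mathbf 0$ the velocity equals $v_{r}\,\nbf_p$ with $\nbf_p=\nbf_c\times\nbf_o$, so $\nbf_o^T\vbf=\nbf_c^T\vbf=0$ and the constant-velocity drift of (\ref{dyn_model}) with $\ubf=\mathbf 0$ leaves all three rows of $\Cbf\bar\xbf$ unchanged. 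Hence $\ubf\equiv\mathbf 0$ is a valid terminal controller that renders the terminal set $\mathcal E$ (a sublevel set of $\|\ebf\|$ centred at the origin of the error coordinates) controlled invariant while incurring zero, or arbitrarily small, stage cost.

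Second, I would split the time axis at the instant $k''$ identified in the preceding lemma, after which the sampled estimate satisfies $\hat{\pmb\chi}_{s,k}\equiv\pmb\chi$ and therefore $\hat\Cbf_k\equiv\Cbf$ is time-invariant. On $[0,k'')$ recursive feasibility (the preceding lemma, through the adaptive step $\gamma$ and the correction control (\ref{control})) guarantees that the optimizer exists and the constraints (\ref{vcon})–(\ref{terminal}) hold, so the state cannot escape in finite time and $V_k$ stays finite. For $k\ge k''$ the problem is time-invariant and I would run the standard receding-horizon argument: given the optimal sequence $\Ubf^*_k$, the shifted sequence $\Ubf'_{k+1}$ obtained by dropping $\ubf^*_{k|k}$ and appending the terminal control $\mathbf 0$ is feasible at $k+1$ by invariance of $\mathcal E$, whence
\begin{equation*}
V_{k+1}\le J(\Ubf'_{k+1}) = V_k - \|\Cbf\bar\xbf_{k+1}-\cbf_r\|_Q^2 - \|\ubf^*_{k|k}\|_R^2 \le V_k - \|\ebf_{k+1}\|_Q^2 .
\end{equation*}

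Third, I would extract asymptotic stability from this descent inequality. Summing from $k''$ gives $\sum_{k>k''}\|\ebf_k\|_Q^2\le V_{k''}<\infty$, hence $\ebf_k\to\mathbf 0$; combined with the monotone non-increase of $V_k$ and the positive definiteness of $V$ near the origin (it vanishes only when $\ebf=\mathbf 0$ and $\ubf^*=\mathbf 0$, since the stage cost is positive definite in $(\ebf,\ubf)$ and the dynamics are forward complete), one obtains Lyapunov stability and therefore asymptotic stability of $\ebf=\mathbf 0$. The main obstacle I anticipate is twofold: (i) the cost in (\ref{compact_obj}) omits the current-state stage term, so to lower-bound $V_k$ by a class-$\mathcal K$ function of $\|\ebf_k\|$ one needs the one-step propagation estimate relating $\|\ebf_k\|$ to $\|\ebf_{k+1}\|$ and $\|\ubf^*_{k|k}\|$ through $\Abf,\Bbf,\Cbf$; and (ii) the time variation of $\hat\Cbf_k$ before convergence — I would handle this either by the clean cut at $k''$ sketched above, or, for a smoother statement, by treating $\Delta\Cbf_k\to\mathbf 0$ (guaranteed by the observer convergence, Lemmas \ref{con_con}–\ref{con_rate}, and the update rule (\ref{chiupdate})) as a vanishing perturbation and invoking an ISS/cascade argument with the estimator error as the decaying input.
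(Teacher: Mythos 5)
Your proposal follows essentially the same route as the paper's proof: recursive feasibility of the shifted control sequence with a zero terminal input, the resulting non-increase of the optimal cost $J^*(k+1)\le J^*(k)-\|\hat\Cbf_k\bar\xbf_{k+1}-\cbf_r\|_Q^2-\|\ubf^*_{k|k}\|_R^2$, and the estimator convergence $\hat\Cbf_k\to\Cbf$ to pass from the estimated to the true tracking error. Your additions—the explicit check that $\ubf=\mathbf{0}$ leaves $\Cbf\bar\xbf$ invariant when $\ebf=\mathbf{0}$ (since $\nbf_o^T\vbf=\nbf_c^T\vbf=0$ along the reference velocity), the clean split at the finite time $k''$ where the sampled estimate locks onto $\pmb\chi$, and the remark that a class-$\mathcal{K}$ lower bound on the value function is still needed for full Lyapunov-type asymptotic stability—are refinements of steps the paper asserts more tersely, not a different argument.
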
	
		\begin{proof}
			According to the feasibility Assumption (\ref{feasibility}), the solution of Problem (\ref{problem}) is feasible for a state $\xbf_k\in \mathcal{X}$. 
			Denote the optimal solution at time instance $k$ as $\Ubf^*_k = \begin{bmatrix}
			(\ubf^*_{k|k})^T & (\ubf^*_{k|k})^T&\cdots& (\ubf^*_{k+H-1|k})^T
			\end{bmatrix}^T$. The corresponding objective function is 
			\begin{equation}
			J^*(k) = \sum_{i=1}^{H}\|\hat{\Cbf}_k\bar\xbf^*_{k+i|k}- \cbf_r\|^2_Q + \|\ubf^*_{k+i-1|k}\|_R^2.
			\end{equation}
			Similar to Section \ref{feas_section}, we have one feasible solution at time $k+1$ as 
			$\begin{bmatrix}
			(\ubf'_{k+1|k+1})^T & (\ubf'_{k+2|k+1})^T&\cdots& (\ubf'_{k+H|k+1})^T	
			\end{bmatrix} = \begin{bmatrix}
			(\ubf^*_{k+1|k})^T & (\ubf^*_{k+2|k+1})^T&\cdots& (\ubf'_{k+H|k+1})^T\end{bmatrix}.$ Using the terminal constraint, we have $\hat\Cbf_k\bar\xbf^*_{k+H=|k}- \cbf_r\in \mathcal{E}$. Letting $\ubf'_{k+H|k+1} = \mathbf{0}$, we have $\hat\Cbf_k\bar\xbf^*_{k+H+1|k}= \cbf_r$. Thanks to the local convergence property of the estimator, for $k\to \infty$, $\hat\Cbf_k\to \Cbf$. Then we have the objective function based on the feasible solution as 
			\begin{small}
				\begin{equation}
				\begin{aligned}
				J'(k+1) = &\sum_{i=2}^{H+1}\|\hat{\Cbf}_{k+1}\xbf'_{k+i|k+1} - \cbf_r\|^2_Q +\|\ubf'_{k+i|k+1}\|_R^2\\
				= &\sum_{i=2}^{H}\|\hat{\Cbf}_{k+1}\xbf^*_{k+i|k} - \cbf_r\|^2_Q + \|\ubf^*_{k+i|k}\|_R^2.
				\end{aligned}
				\end{equation}
			\end{small}
			Hence, 
			\begin{equation}
			\begin{aligned}
			J^*(k+1) - J^*(k)\le J'(k+1) - J^*(k)\\
			=-\|\hat{\Cbf}_{k}\xbf_k-\cbf_r\|^2_Q - \|\ubf_{k|k}\|_R^2 \le 0.
			\end{aligned}
			\end{equation}
			As $\hat{C}_k\to {\Cbf}$ as $t\to 0$, we can conclude that the tracking error defined above is asymptotically stable.
		\end{proof}
		\subsection{Plan following under connected planes}
		Based on the above estimator and controller, an algorithm that enables concatenated planes following is described in Algorithm \ref{algorithm2}. 
	\begin{algorithm} 
		\label{algorithm2}
		\SetAlgoLined
		\SetKwInput{kwInit}{Init} 
		\kwInit {Inspection parameters: $v_{r}$, $d_s, n_a, d_c$\;
			\quad\quad MPC parameters:$Q, R, H$, initial state $\xbf_0,\Cbf_0$\;}
			\While{not stop inspection}{
				Subscribe plane estimation information from Algorithm \ref{algorithm1}\;
				Update $\hat{\pmb\chi}_{s,k}$ according to (\ref{chiupdate})\;
				Calculate $\hat\Cbf_{s,k}$ according to (\ref{vectoranddepth}),(\ref{cameratoglobal}),(\ref{Co_def})\;
				\If{Is arrived at boundary}{Update the reference $\cbf_r$  according to (\ref{pattern}),(\ref{velocity})\;$a \leftarrow a +1$}
				Calculate the MPC controller
				$\ubf_{k:k+H-1},\xbf_{k+1:k+H}\leftarrow   PF\_MPC(\xbf_k,\Cbf_k) $ \;
				Execute $\ubf_k$.
				
			}
			
			\caption{Plane Follower}
		\end{algorithm}
%	\section{Simulations}

	\section{Simulations}
	This subsection presents numerical simulations based on MATLAB \cite{MATLAB_2015} and the Machine Vision Toolbox \cite{corke2007matlab}, to test the performance of the proposed  plane estimation and plane following methods. Specifically, the simulation scenario is described as a projective camera model observing synthetic features so as to estimate and follow the planes.
	\subsection{Plane Estimation}
	In the first simulation scenario, the performance of the plane estimator based on different settings, such as different translational velocities and different level of feature richness, are investigated. The synthetic 3-D feature points are randomly placed on the plane determined by the plane normal vector $\nbf_o = [-0.2425, -0.9701 , 0]^T$, and feature depth $d_o = 9.7011$. One camera with limited horizontal FOV $46^\circ$ and vertical FOV $38^\circ$, is initialized with the transformation matrix  
	\[\Mbf(0)=\begin{bmatrix}
		-1& 0& 0& 40\\
		0 & 0& -1 & 20\\
		0 & -1 & 0 & 5\\
		0 & 0& 0& 1
	\end{bmatrix}.\] 
%The camera is moving with constant translational velocity
%	% $\vbf = \begin{bmatrix}
%%	0.5&0&0
%%	\end{bmatrix}$ 
%and zeros rotation velocity.
	\begin{figure}
\centering
\includegraphics[width=0.8\linewidth]{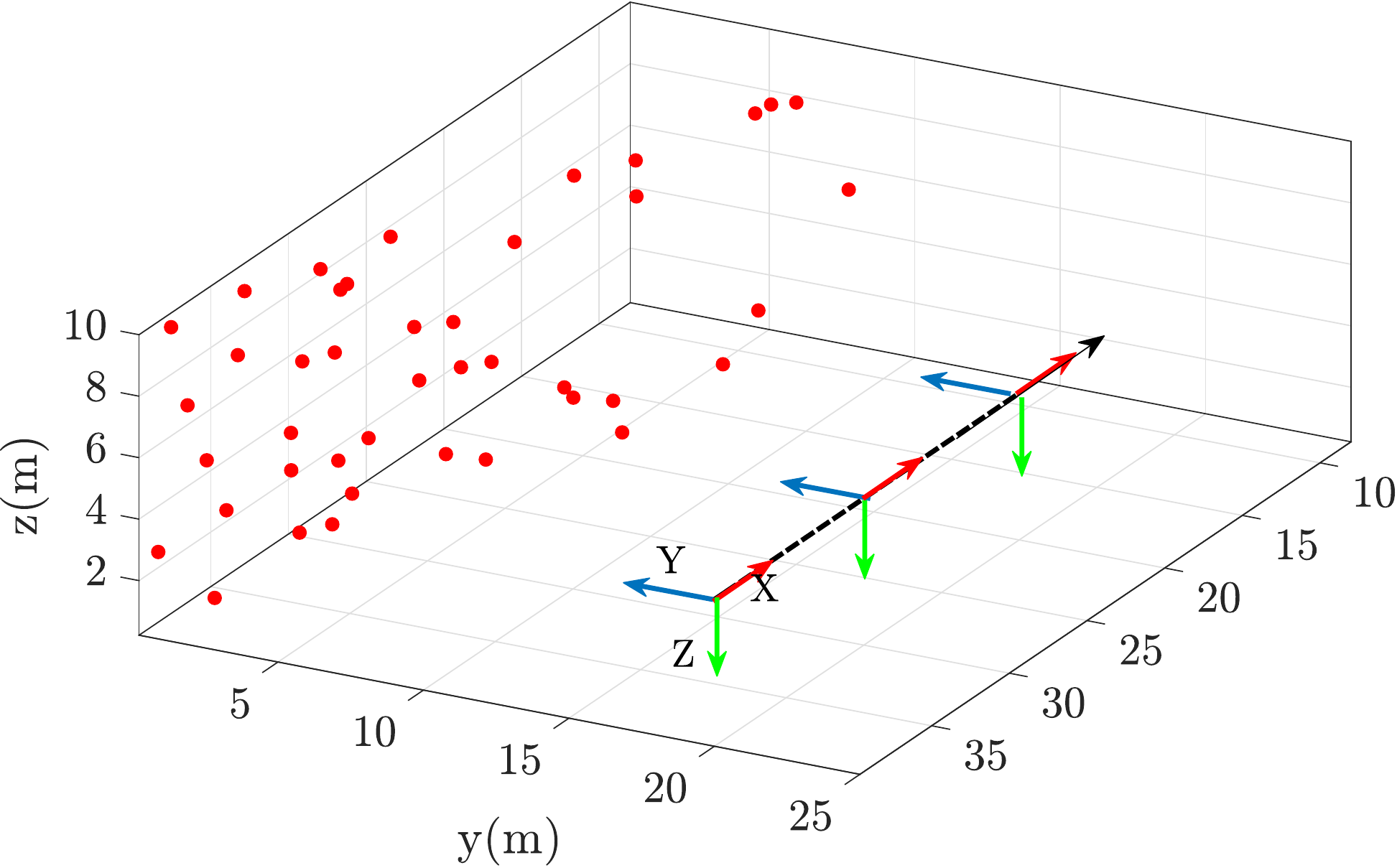}
\caption{The simulation scenario with synthetic features in red dot and camera pose in blue.}
\label{fig:illustrator-crop}
\end{figure}\newline
The observer gains are set as $H = 12\Ibf$, and $\pmb\lambda = 0.95\Ibf$.
The results of 40 seconds simulation based on the proposed observer running at 10 Hz is plotted in Figure \ref{fig:plane_norm_depth-crop}. In this simulation, 100 random features are simulated, and the camera is moving with a constant velocity $\vbf= [0.5;0;0]$. Fig.\ref{fig:plane_norm_depth-crop} shows the convergence behavior of the estimation for both plane normal vector and plane depth.

Moreover, the effect of the translational velocity and the number of features $n_s$ on the convergence rate of the estimation is investigated. The plane normal vector estimation error and the estimation error of depth $d_o$ are defined respectively as  \[e_{{\nbf}_o} = \cos^{-1}(\nbf_o^T\hat\nbf_o),\quad e_{d_o} = d_o- \hat d_o.\]  The evolutions of the error  under different translational velocity $v_x = 0.1$m/s, $0.25$m/s, $0.5$m/s are plotted in Figure \ref{fig:error_vel_all-crop}, which shows that faster movement results in a faster convergence rate.  Similarly, the evolution of the errors under different feature numbers are plotted in Figure \ref{fig:error_seed_all-crop}, which shows that the errors converge to zero faster as more features are implemented in the observer. The above two results are consistent with the conclusions of Lemma \ref{con_rate}.
	\begin{figure}
\centering
\includegraphics[width=0.85\linewidth]{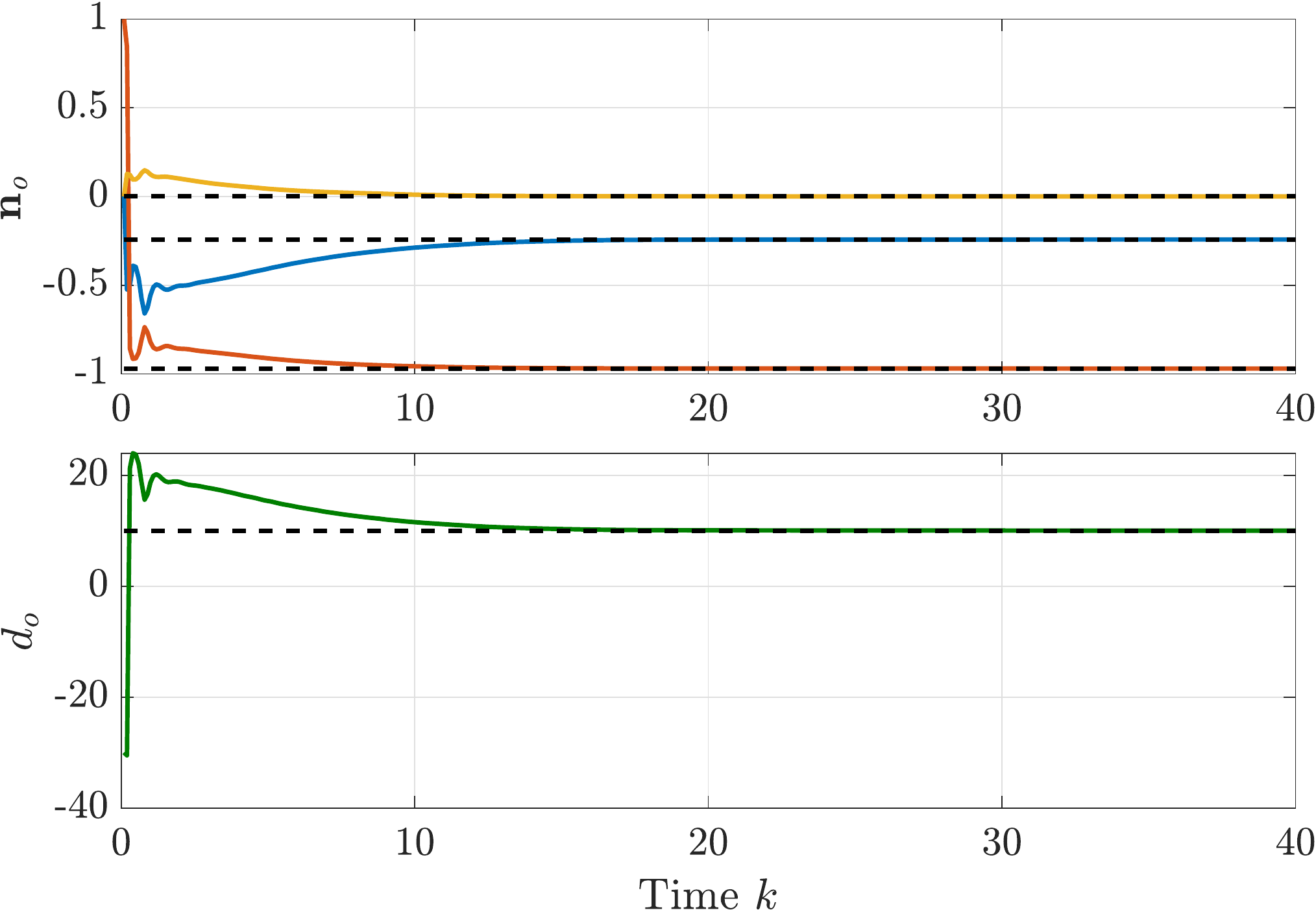}
\caption{The plane vectors $\nbf_o$, and $d_o$ in colors w.r.t to ground-truth in black dash-lines.}
\label{fig:plane_norm_depth-crop}
\end{figure}
\begin{figure}
\centering
\includegraphics[width=0.85\linewidth]{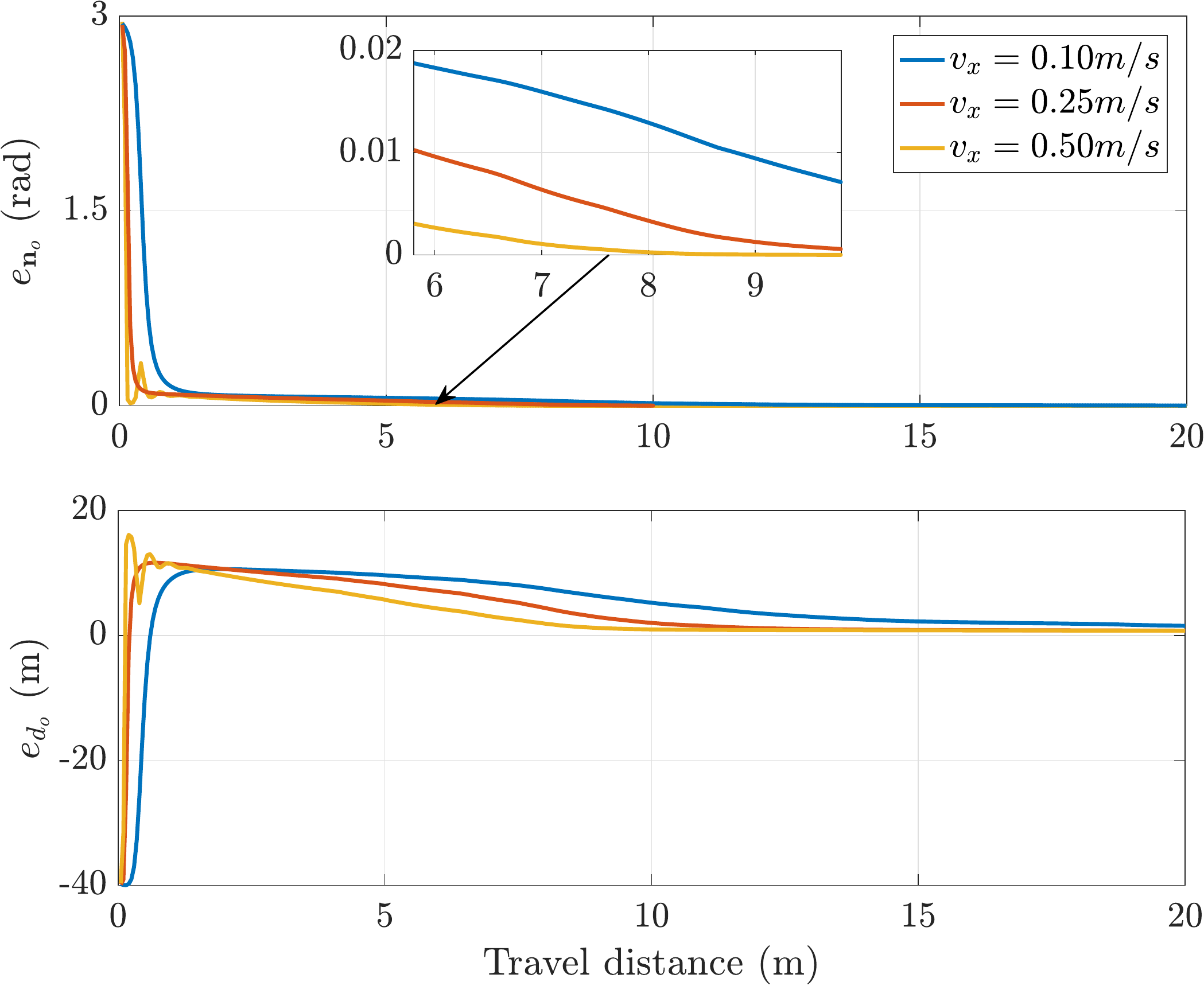}
\caption{The errors $e_{\nbf_o}$ and $e_{d_o}$ under different translation velocities $v_x = 0.1 \text{m/s},0.25 \text{m/s},0.5 \text{m/s}$ respectively.}
\label{fig:error_vel_all-crop}
\end{figure}
\begin{figure}
\centering
\includegraphics[width=0.85\linewidth]{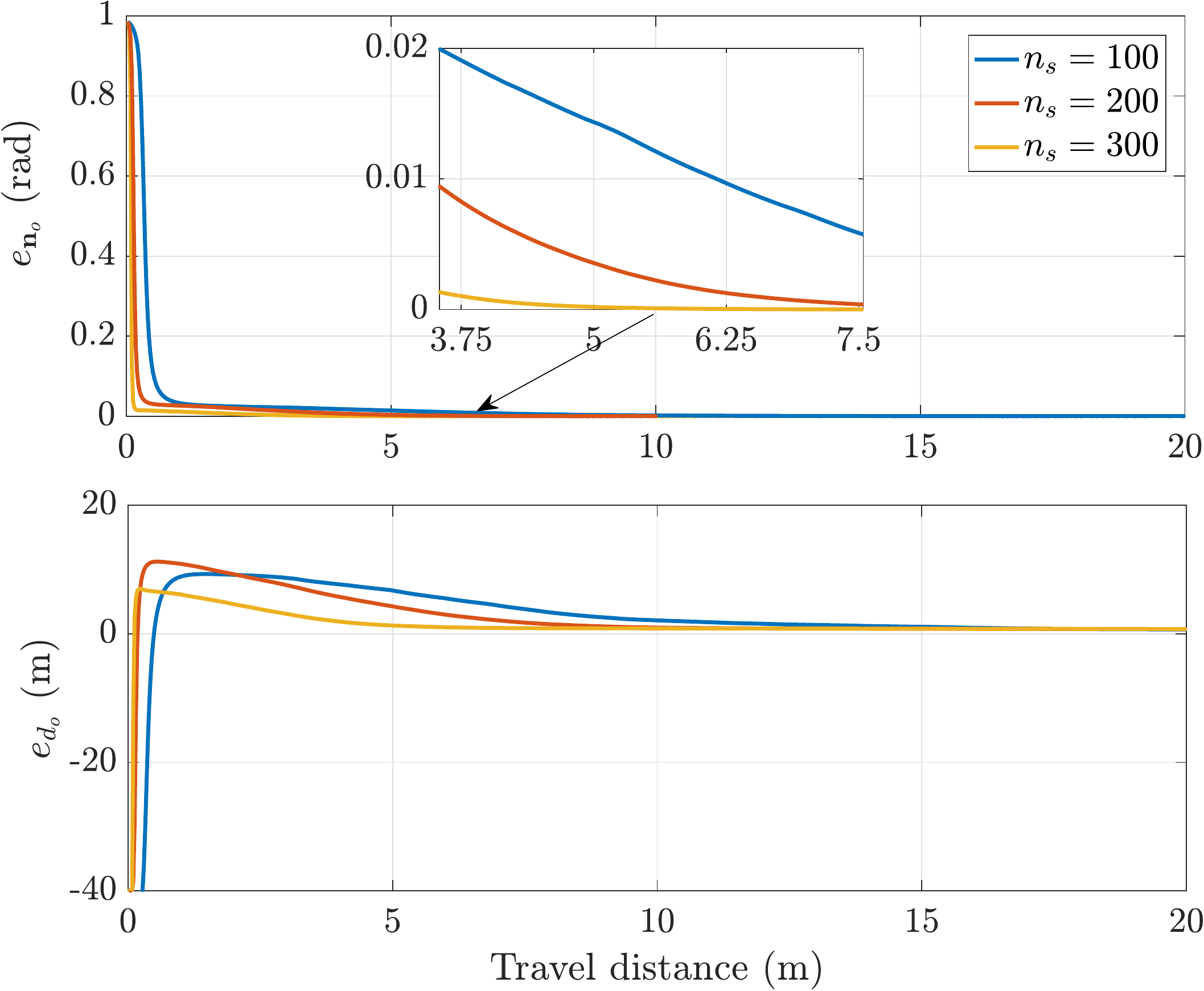}
\caption{The errors $e_{\nbf_o}$ and $e_{d_o}$ under different feature richness, with $n_s = 100, 200, 300$ respectively.}
\label{fig:error_seed_all-crop}
\end{figure}

{ Finally, a comparative study of the proposed method along with a typical EKF based estimator (EKF) and a homography-constrained method (Homo.)\cite{Giordano2014An} is provided under a similar scenario as above. Specifically, the covariance of the measurement noise of $\mathbf{s}$ is set as $\Sigma_\mathbf{s} = \text{diag}([0.001,0.001])$. The results of one trial using the three methods are plotted in Fig. \ref{fig:errorcompv3-crop}. Additionally, 100 Mento-Carlo trials are carried out, and the averaged estimation errors with different methods are plotted in Figure \ref{fig:error_combined-crop}.   
\begin{figure}
\centering
\includegraphics[width=0.85\linewidth]{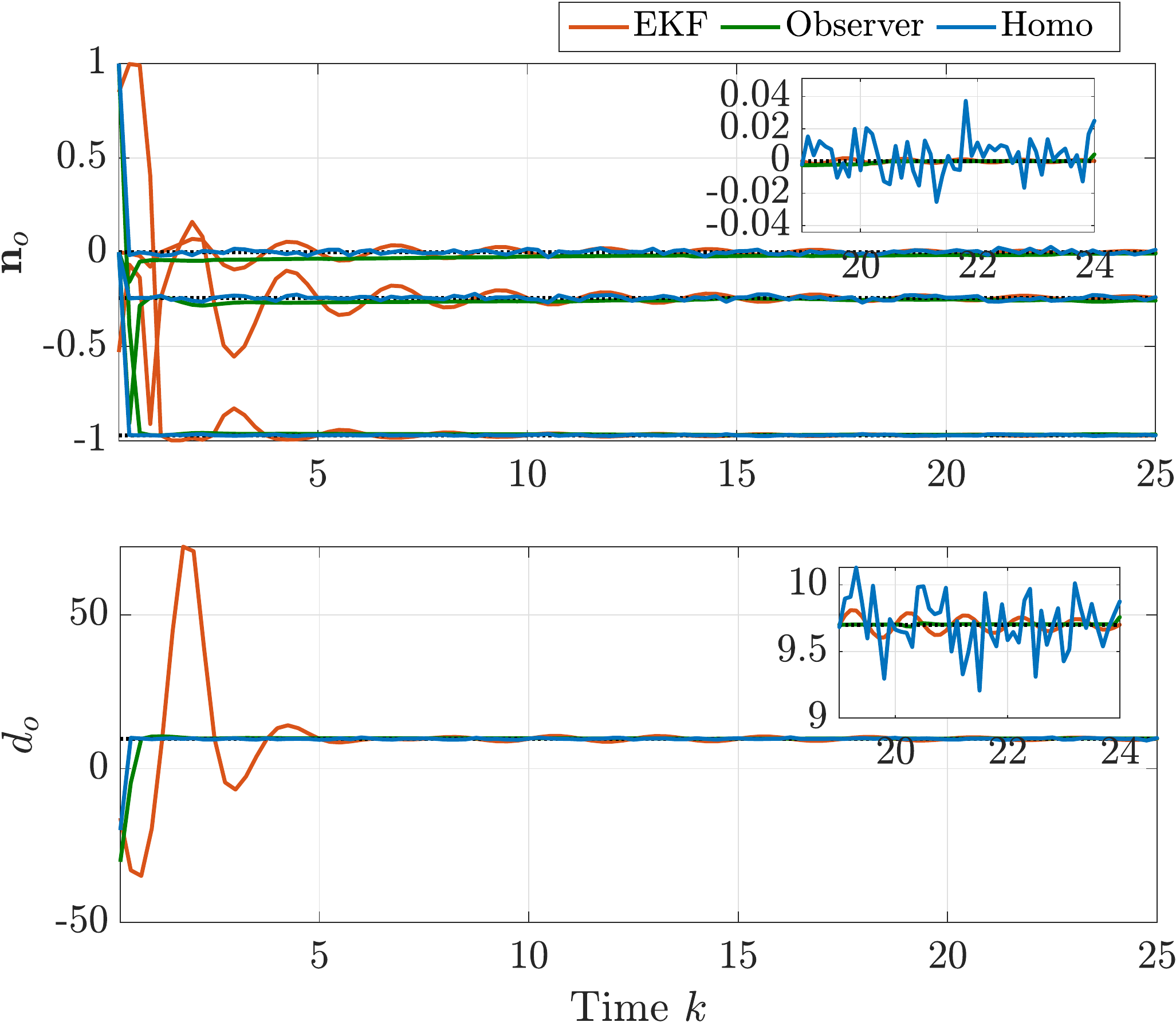}
\caption{{ The estimation result of the proposed result v.s. an EKF estimator and the homograph based method \cite{Giordano2014An}.}}
\label{fig:errorcompv3-crop}
\end{figure}
From Fig. \ref{fig:errorcompv3-crop} and Fig. \ref{fig:error_combined-crop}, it's apparent that the convergence rate of the proposed method is faster compared to the EKF, and the homography based method can achieve the fastest convergence rate. Nevertheless, the homography decomposition method only relies on image correspondences and is very
sensitive to measurement noise, as shown in the amplified sub-figures in Fig. \ref{fig:errorcompv3-crop}. The oscillation of estimation may produce unwanted plane following behavior.
On the contrary, the EKF and proposed estimator recursively estimate the depth of the plane, and therefore can achieve much smoother estimation and plane following results. 
\begin{figure}
\centering
\includegraphics[width=0.85\linewidth]{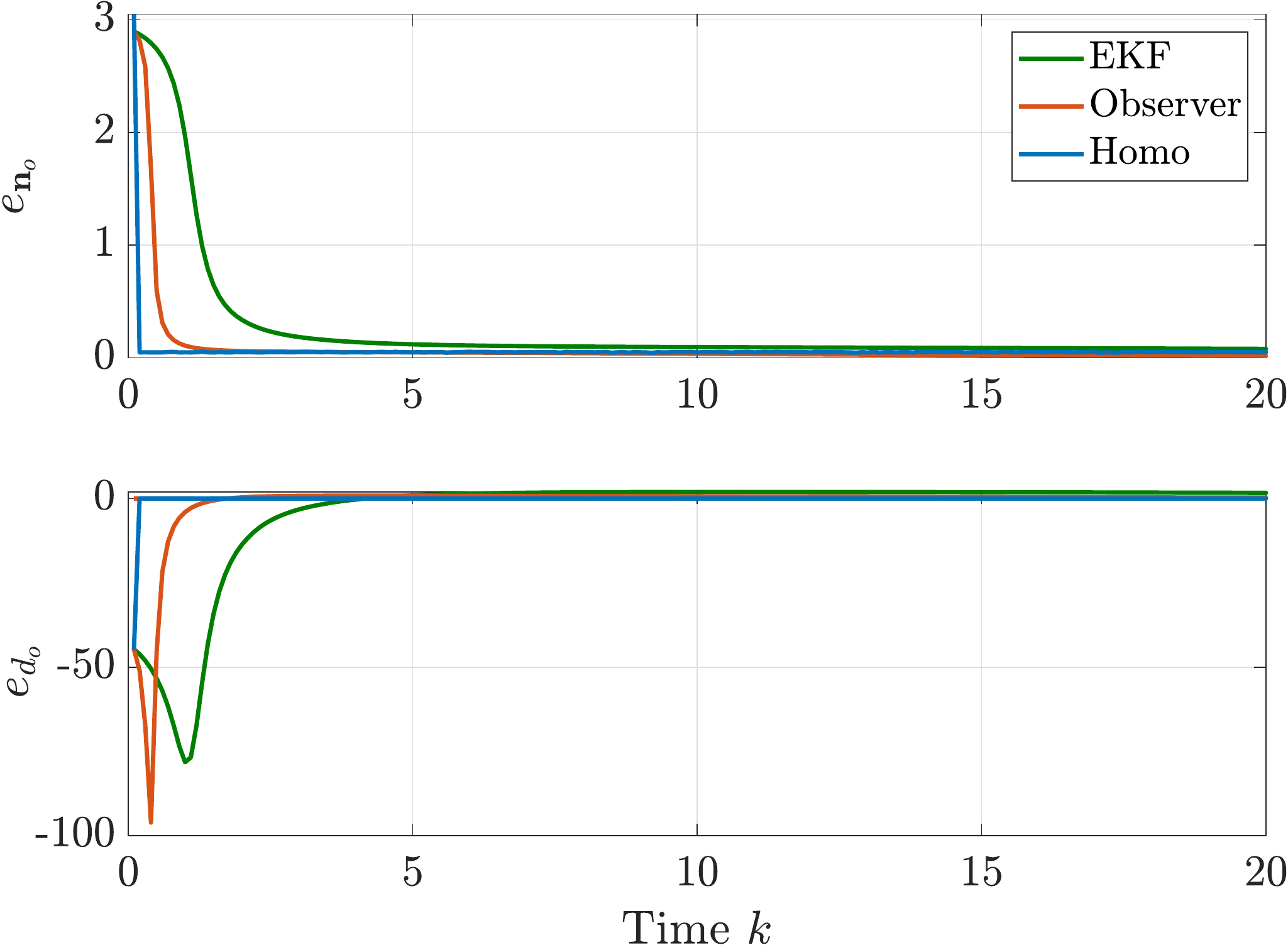}
\caption{{ The errors $e_{\nbf_o}$ and $e_{d_o}$ of the proposed method v.s. an EKF estimator and the homograph based method \cite{Giordano2014An} in 100 Mento-Carlo simulations.}}
\label{fig:error_combined-crop}
\end{figure}

}

\subsection{Plane Following}
The plane following based on Algorithm \ref{algorithm2} is simulated here. Specifically, we consider the continuous following of two connected planes, with the plane parameters denoted respectively as $\bar{\nbf}_{o1} = [-0.2425, -0.9701 , 0,9.7011]$, and   $\bar{\nbf}_{o2} = [-0.9701, -0.2425 , 0,9.7011]$.
The plane following parameters are set as $d_s = 10  $m, $v_{ref} = 1$m/s. The inspection pattern is set as $d_0 = 5, d_c = 2$m.
The UAV/Camera velocity and acceleration constraints are set respectively as $\|\vbf\|_{\infty} = 3$m/s and $\|\ubf\|_{\infty} = $0.5m/s$^2$.

The plane following trajectory is plotted in Fig. \ref{fig:trajectory_two_planes-crop} which indicates that proper path following/inspection behavior is achieved. The corresponding velocity and control input are plotted in Figure \ref{fig:velocity_control_two_planes-crop}.
The estimation results during the path following process is provided in Fig. \ref{fig:estimate_two_planes-crop}, which indicates that the proposed method can effectively estimate the plane parameters and quickly responds to the plane changes. In addition, the sampling strategy (\ref{chiupdate}), although leading to some latency, can effectively guarantee the feasibility of the proposed controller by providing smoother estimation results.  Finally, the performance, i.e. the path following error, defined in (\ref{track_error}), is plotted in Figure \ref{fig:Tracking_err_two_planes-crop}. Conclusively, the proposed estimation and control methods can effectively fulfill the inspection task.
\begin{figure}
\centering
\includegraphics[width=0.85\linewidth]{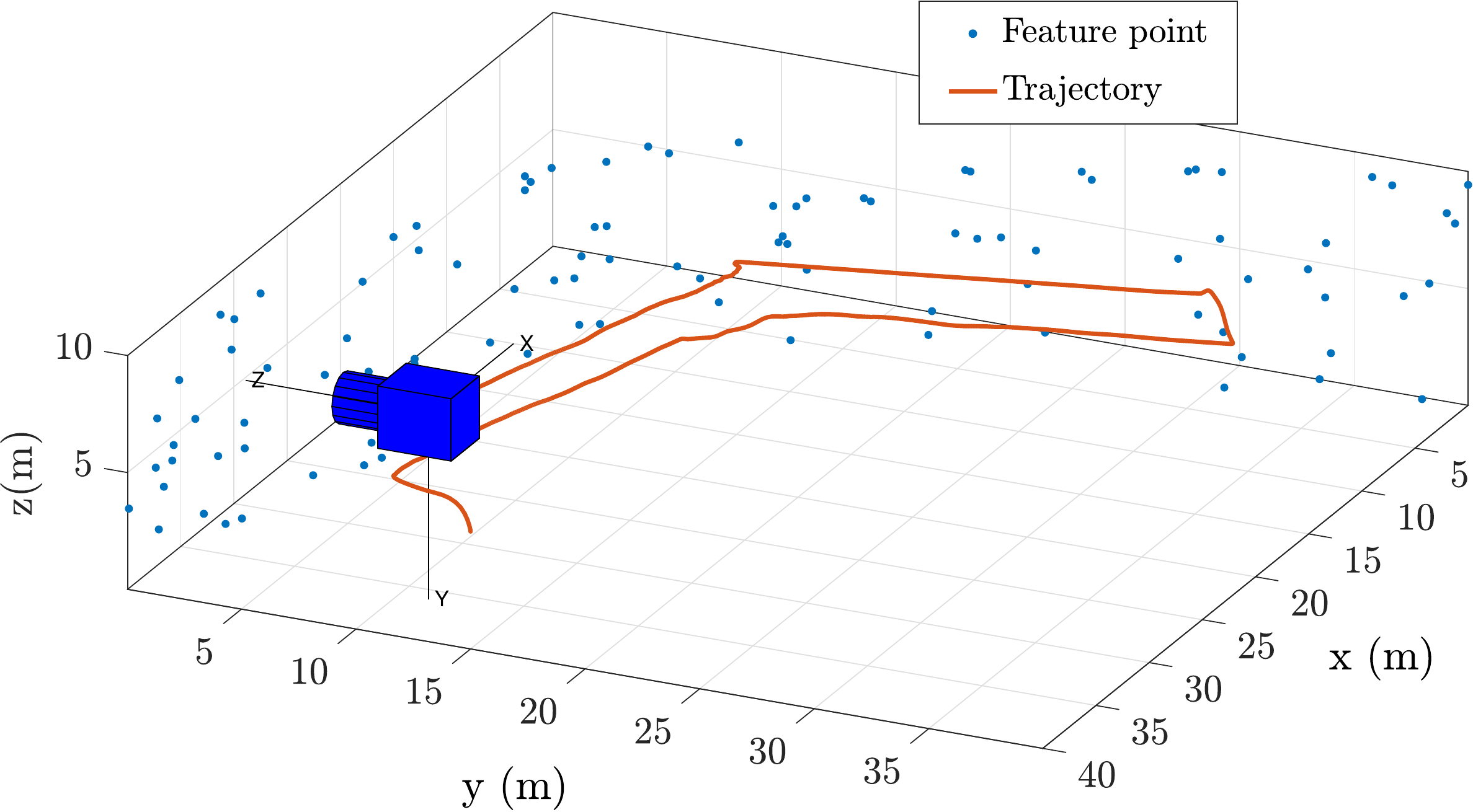}
\caption{The plane following trajectory w.r.t two connected planes.}
\label{fig:trajectory_two_planes-crop}
\end{figure}
\begin{figure}
	\centering
	\includegraphics[width=0.85\linewidth]{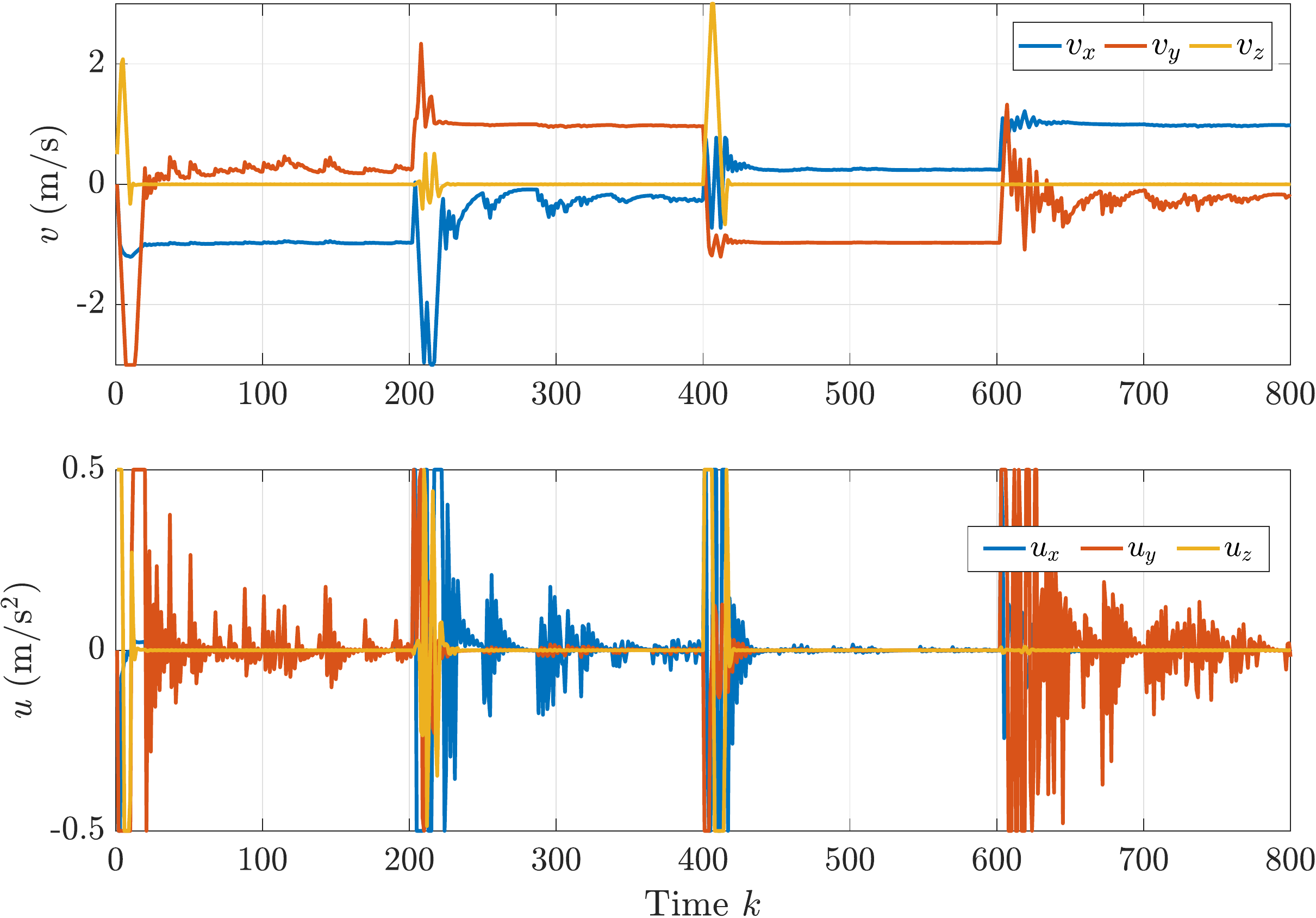}
	\caption{The velocity and control output of the plane following.}
	\label{fig:velocity_control_two_planes-crop}
\end{figure}
\begin{figure}
\centering
\includegraphics[width=0.85\linewidth]{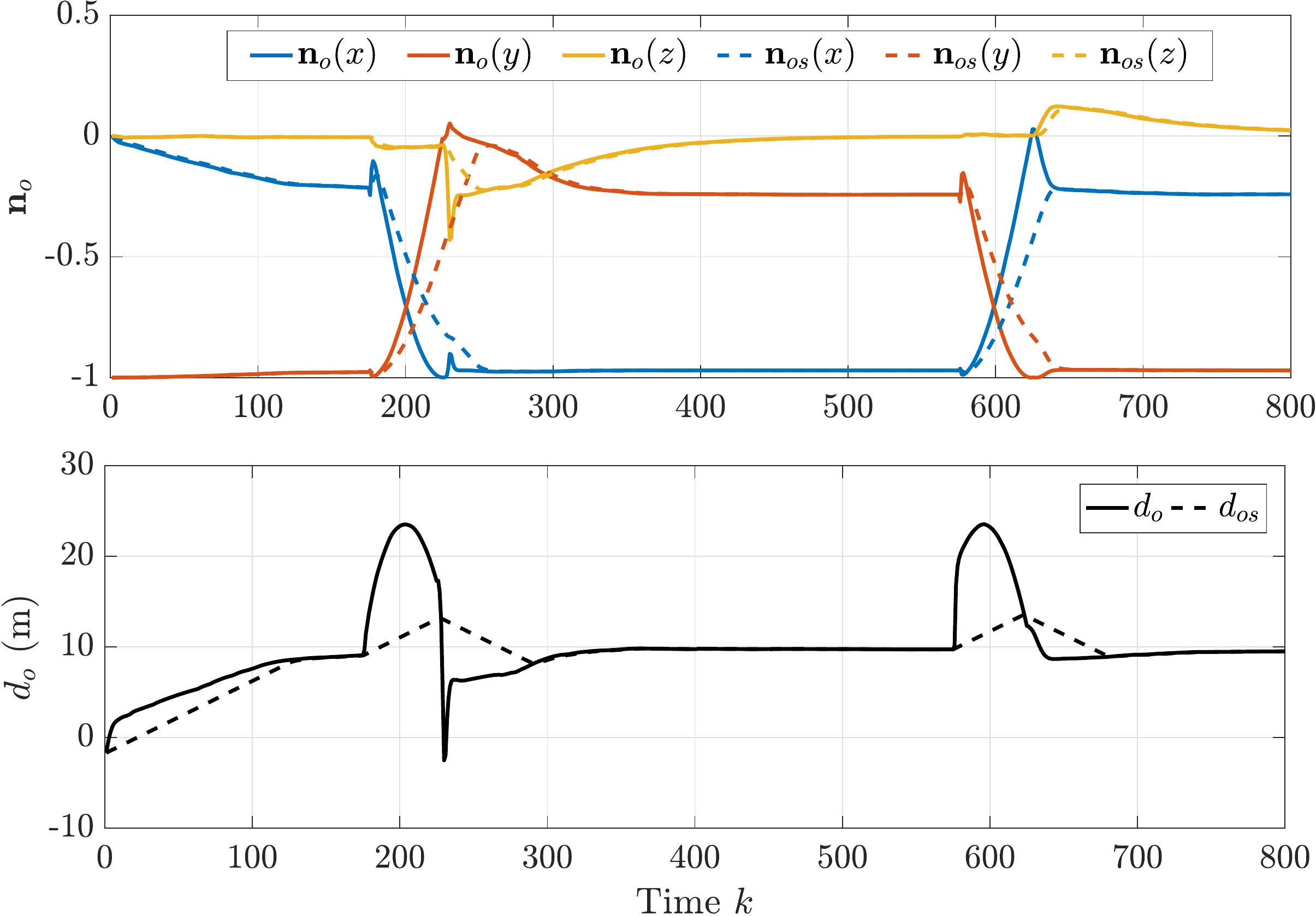}
\caption{The plane parameters estimation with the original estimation in solid lines and the sampled according to (\ref{chiupdate}) in dashed lines.}
\label{fig:estimate_two_planes-crop}
\end{figure}
\begin{figure}
\centering
\includegraphics[width=0.85\linewidth]{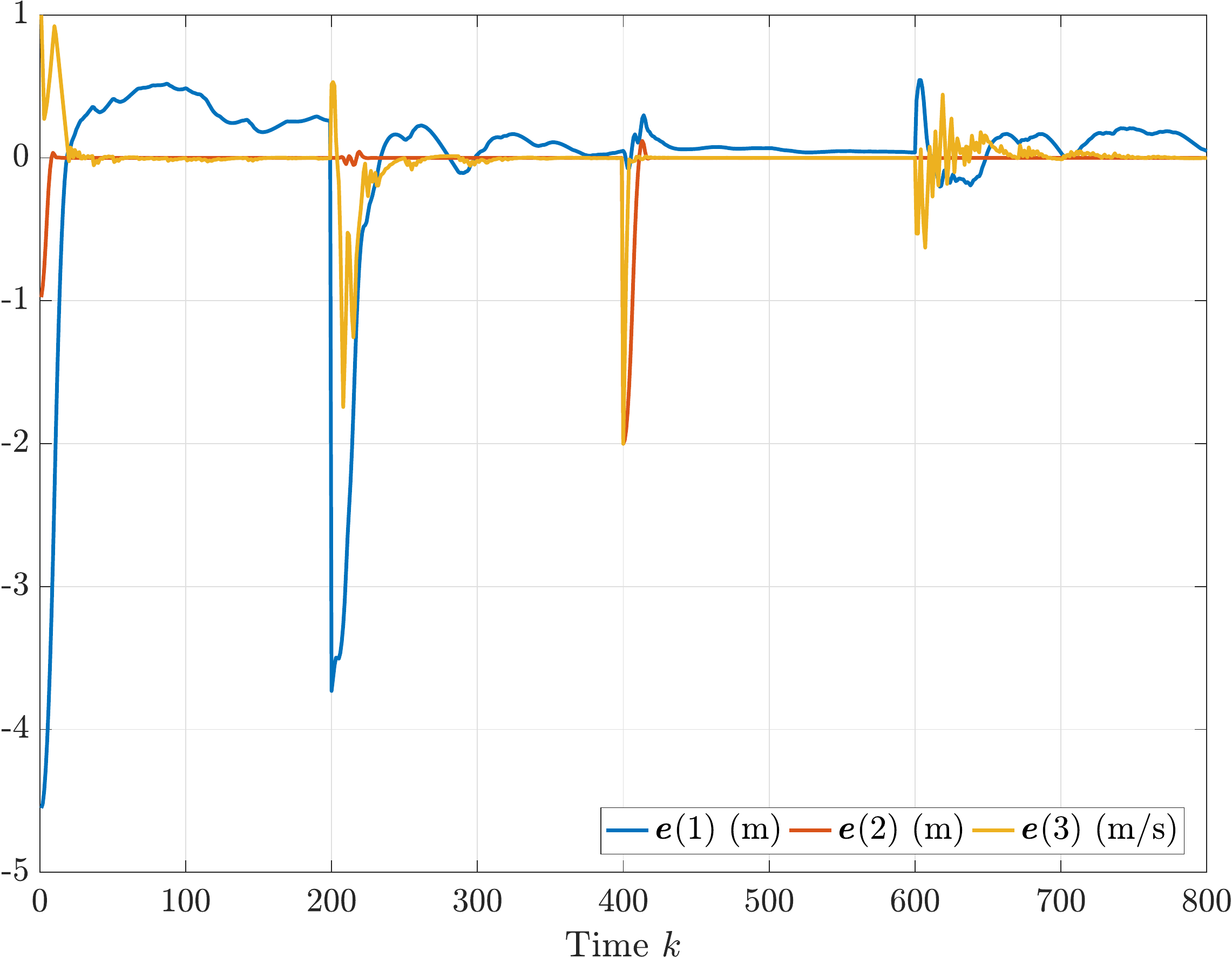}
\caption{The plane following error $\ebf$ defined in (\ref{track_error}).}
\label{fig:Tracking_err_two_planes-crop}
\end{figure}
\section{Experiements}
In this section, the proposed plane estimation and following methods are tested based on field experiment trials. 

A DJI M100 quadrotor carrying an ob-board camera as shown in Fig. \ref{fig:hardware} is implemented to inspect a building. For safety and comparison purpose, an on-board LiDAR is also installed to provide collision avoidance function. {Due to its high range measurement accuracy, around $0.015$ m $\sim0.1$ m, the LiDAR also serves as the ground truth for the path estimation and path following}. The software flow of the experiment system is given in Fig. \ref{fig:softwareflow}. Besides the inspection Algorithm \ref{algorithm1} and \ref{algorithm2}, the VINS-mono package \cite{qin2018vins} is  also implemented to provide the odometry information of the UAV. The interaction between the functional modules and the platform is through the DJI onboard SDK \cite{osdk}.

In the inspection task, the inspected building is shown as Figure \ref{fig:scene}, The separation distance is $d_s=7.5$m, and the inspection pattern is specified with $d_0 = 4$m, and $d_c = 2$m. For safety purpose, horizontal operation boundaries are set as $p_y\in [0,10]$.

{During the inspection process, the sampled estimation of the building plane  $\hat{\nbf}_{os}$ and $\hat{d}_{os}$ is shown in Fig. \ref{fig:est_exp-crop}. For comparison purpose, the actual separation distance of the UAV trajectory to the building measured from the on-board LiDAR is also plotted. 
The corresponding estimation errors, expressed with $e_{\nbf_o}$ and $e_{d_o}$, are plotted in Fig. \ref{fig:est_err_exp-crop}, which shows that the errors of the estimation are within an acceptable level, $e_{\nbf_o}< 0.2$ rad, and  $e_{d_o}< 0.2$ m. 

%due to that the error evolution of the VINS-mono system is also integrated to the plane estimation as (\ref{cameratoglobal}), the estimation of the plane is slightly diverged.  

The inspection trajectory of the quadrotor is shown in Fig. \ref{fig:scene} with generated point cloud\footnote{The point cloud is generated offline using recorded data from the LiDAR and the  A-LOAM mapping package \url{https://github.com/HKUST-Aerial-Robotics/A-LOAM}.}, which demonstrates that a proper inspection behavior is achieved. The corresponding velocity and control input are plotted in Fig. \ref{fig:vel_exp-crop} and \ref{fig:acc}, respectively.
To further investigate the plane following performance, the plane path following error defined in (\ref{track_error}) is plotted in Fig. \ref{fig:tracking_err_exp-crop}. As indicated in Fig. \ref{fig:tracking_err_exp-crop}, the tracking errors of the separation distance, displacement, and reference velocity are all within acceptable bounds. Note that the plane estimation and path following also depend on the odometry information provided by the VINS-mono, as shown in (\ref{cameratoglobal}), therefore the errors of plane estimation in Fig. \ref{fig:est_exp-crop} and path following in Fig. \ref{fig:tracking_err_exp-crop} also include the uncertainty of the odometry.}

%As indicated in Fig. \ref{fig:trajectory_two_planes-crop}, proper plane following behavior is achieved based on the estimation results and proposed controller. Specifically, the actual separation distance provided by LiDAR, plotted in red dash-line in Fig.\ref{fig:est_exp-crop}, and the actual horizontal velocity $v_y$, plotted in Fig.\ref{fig:vel_exp-crop}, 
%are  close to the inspection reference $v_{r} = 0.5$m/s and $d_s = 7.5$m, respectively, which indicates that proper building inspection is achieved.
\begin{figure}[h]
\centering
\includegraphics[width=0.7\linewidth]{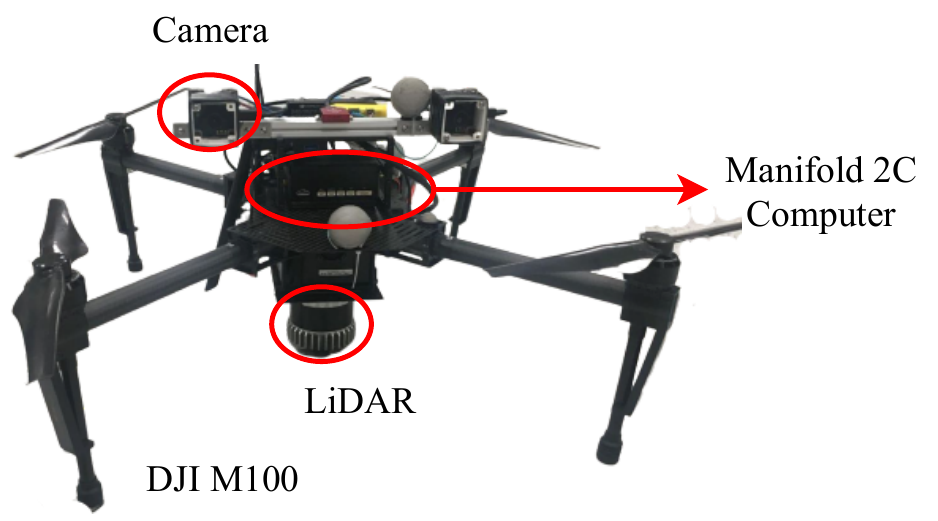}
\caption{The DJI M100 platform with on-board sensors (cameras, LiDAR) and computer.}
\label{fig:hardware}
\end{figure}

\begin{figure}[h]
\centering
\includegraphics[width=1\linewidth]{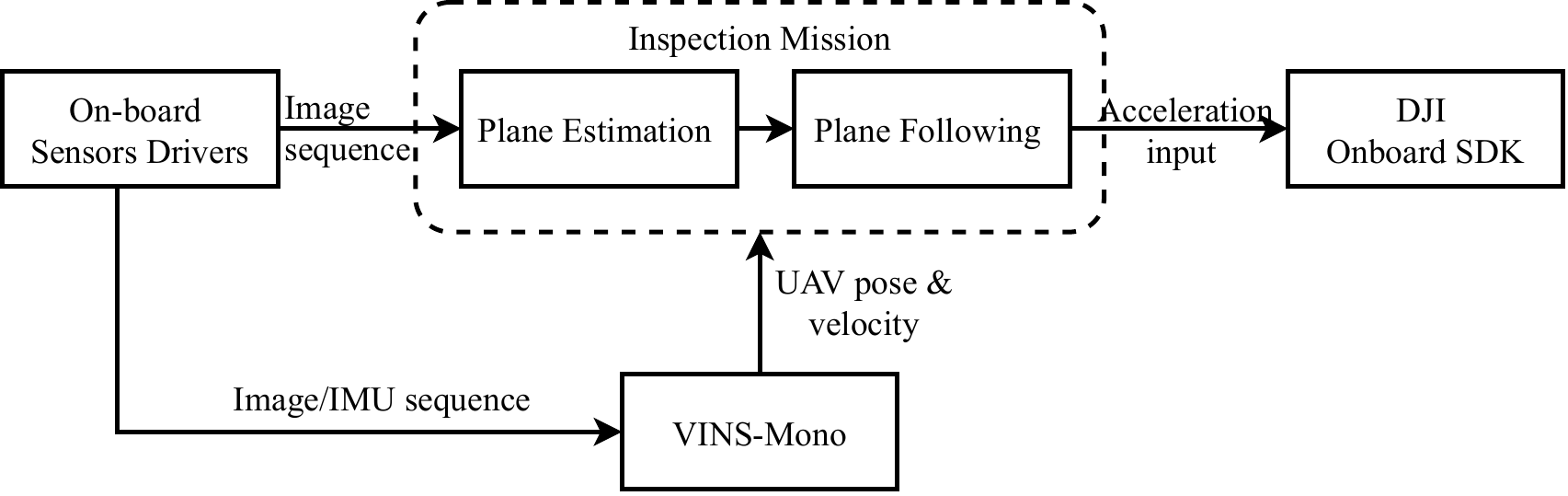}
\caption{The software \& data flow of the experiment platform for inspection task.}
\label{fig:softwareflow}
\end{figure}
\begin{figure}[h]
\centering
\includegraphics[width=0.85\linewidth]{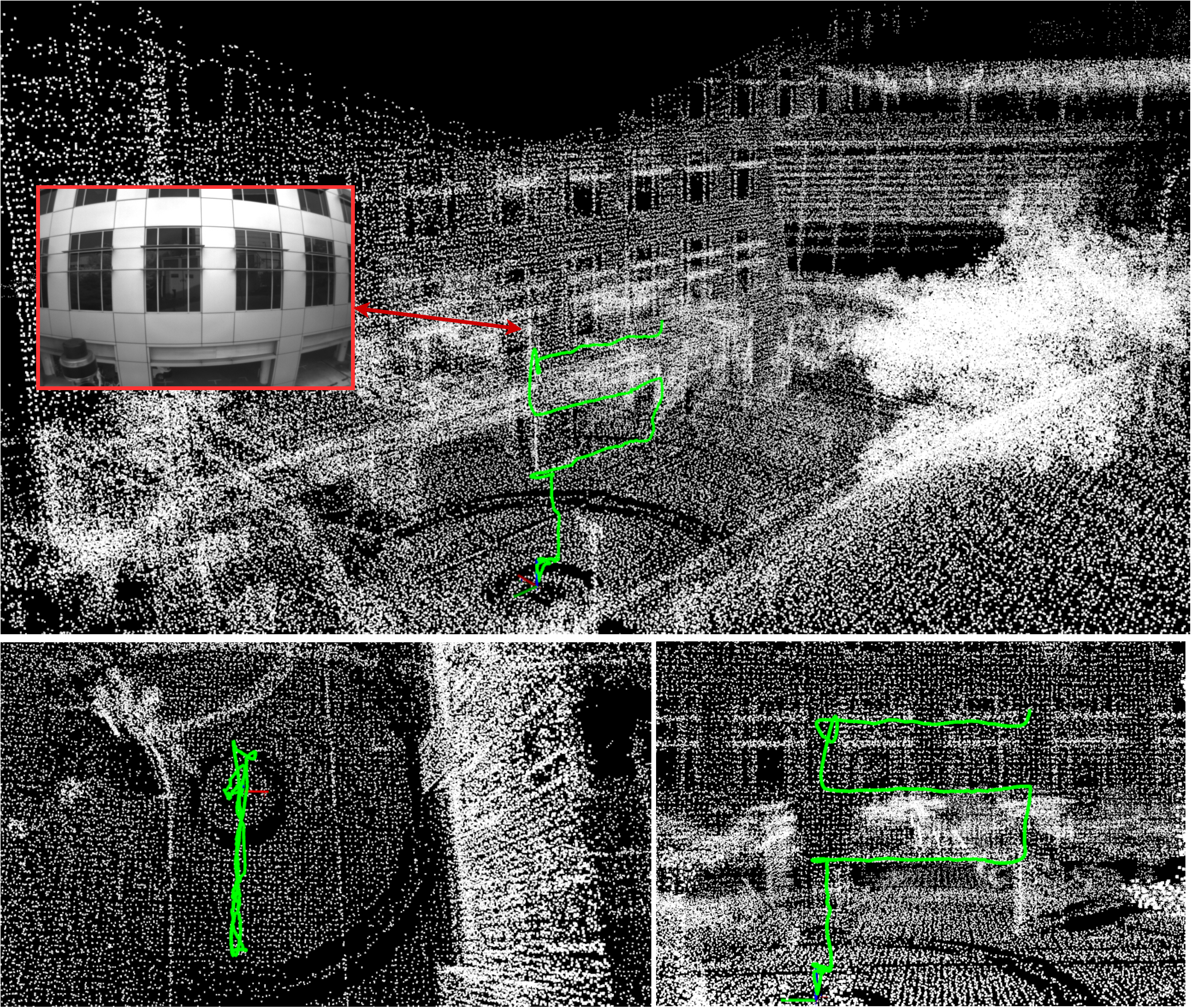}
\caption{The inspection trajectories (green curves) w.r.t. a building (white point cloud).}
\label{fig:scene}
\end{figure}
\begin{figure}[h]
	\centering
	\includegraphics[width=0.85\linewidth]{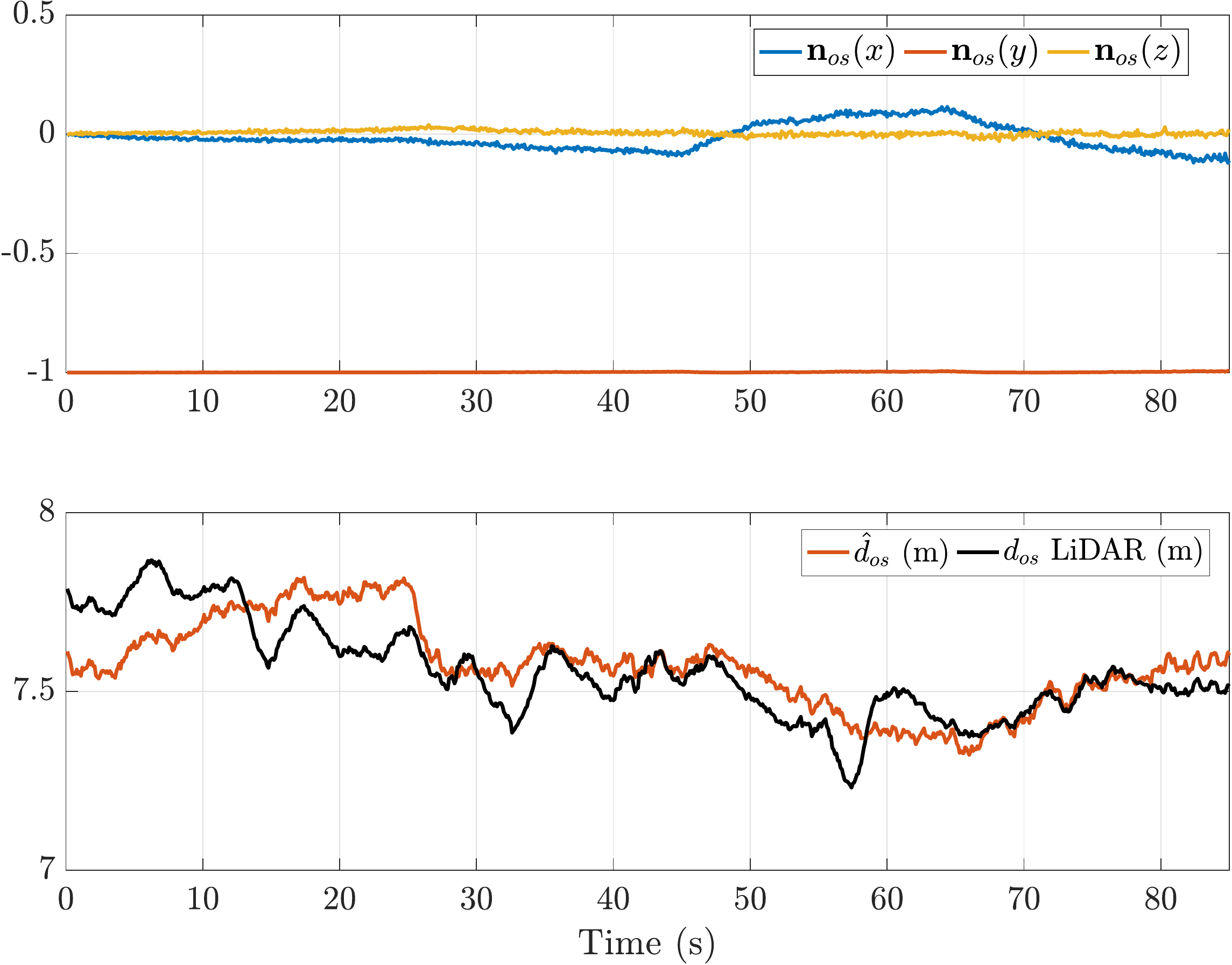}
	\caption{ The sampled plane parameters estimation $\hat\nbf_{os}$ and $\hat d_{os}$, compared to the separation distance provided by $d_{s}$ LiDAR.}
	\label{fig:est_exp-crop}
\end{figure}
\begin{figure}
	\centering
	\includegraphics[width=0.85\linewidth]{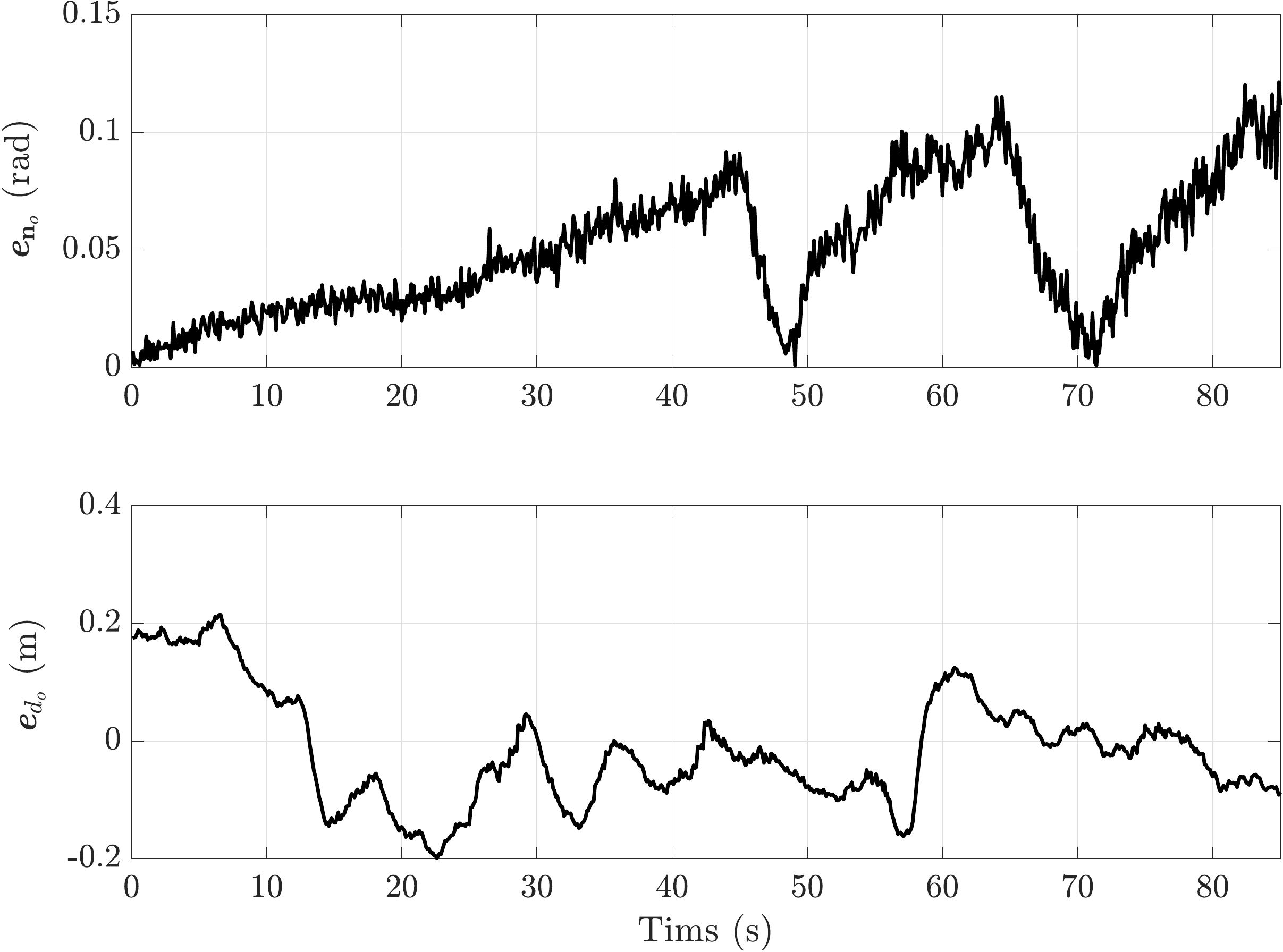}
	\caption{The plane estimation error, the normal vector estimation error $e_{\nbf_o}$ and the depth estimation error $e_{d_o}$.}
		\label{fig:est_err_exp-crop}
	\end{figure}
\begin{figure}[h]
\centering
\includegraphics[width=0.85\linewidth]{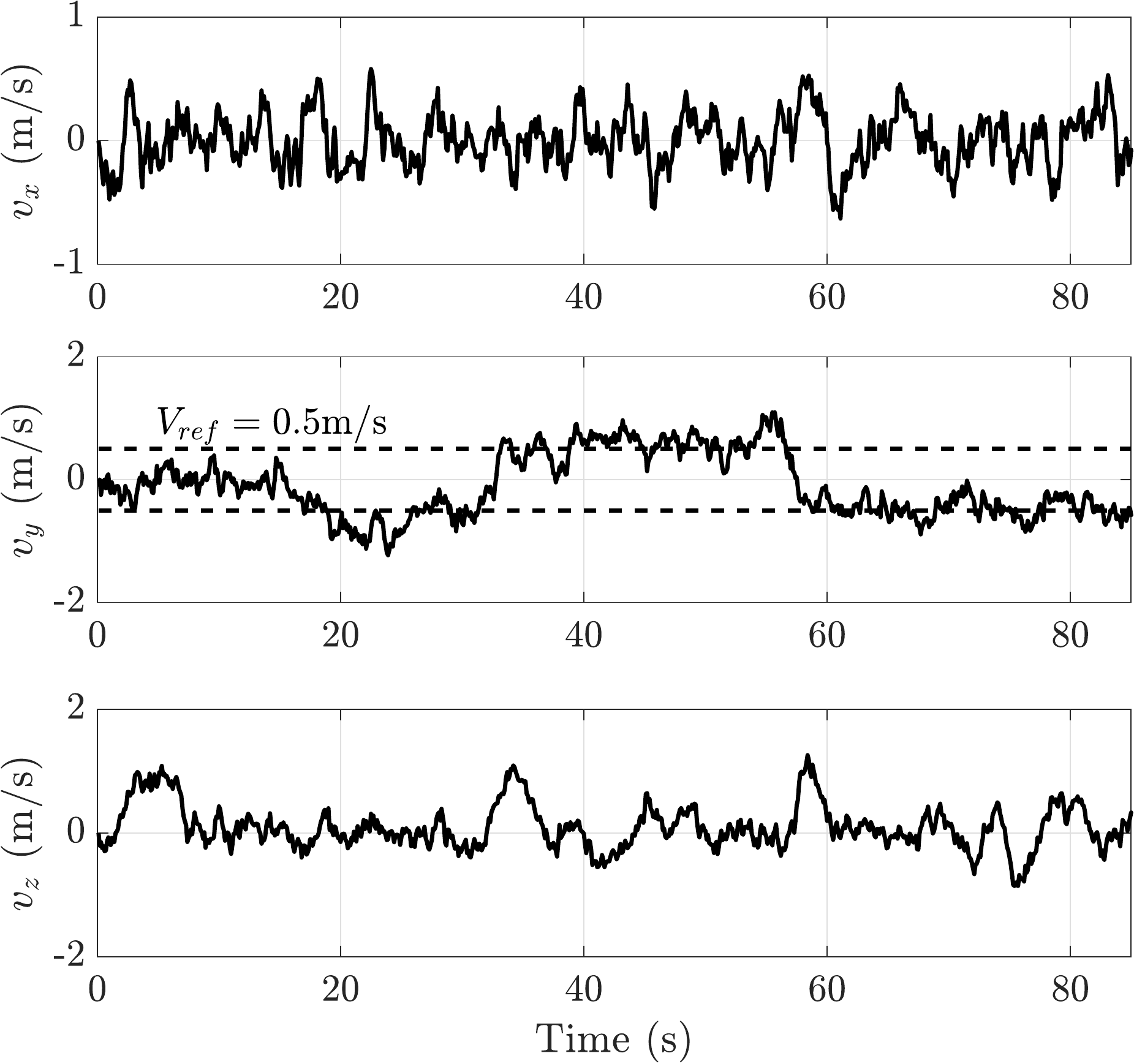}
\caption{The velocity of the inspection trajectory,with $v_r = 0.5$m/s.}
\label{fig:vel_exp-crop}
\end{figure}
\begin{figure}[h]
\centering
\includegraphics[width=0.85\linewidth]{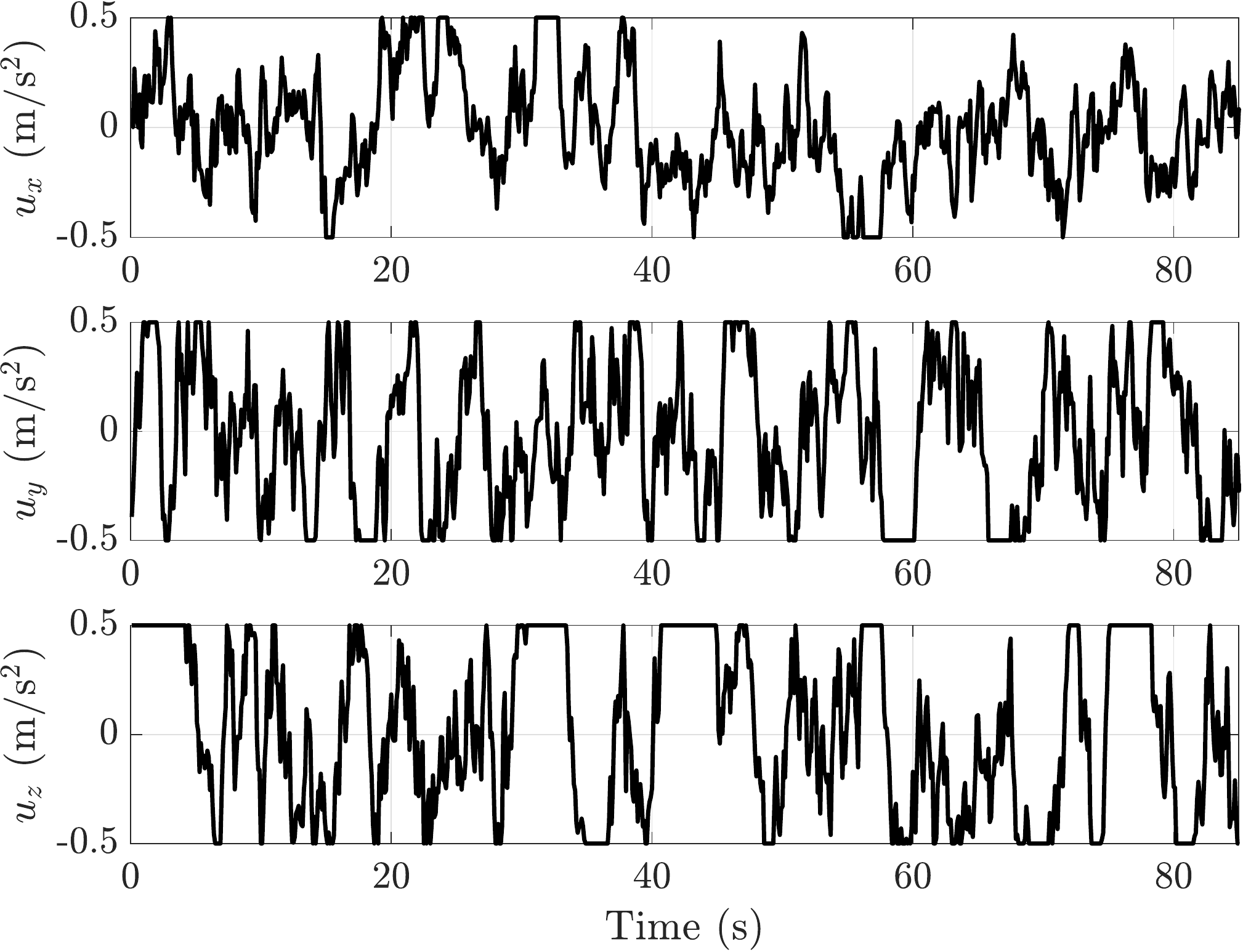}
\caption{The control $\ubf$ with constraints $\|\ubf\|_{\infty}\le 0.5$m/$s^2$. }
\label{fig:acc}
\end{figure}
\begin{figure}
\centering
\includegraphics[width=0.85\linewidth]{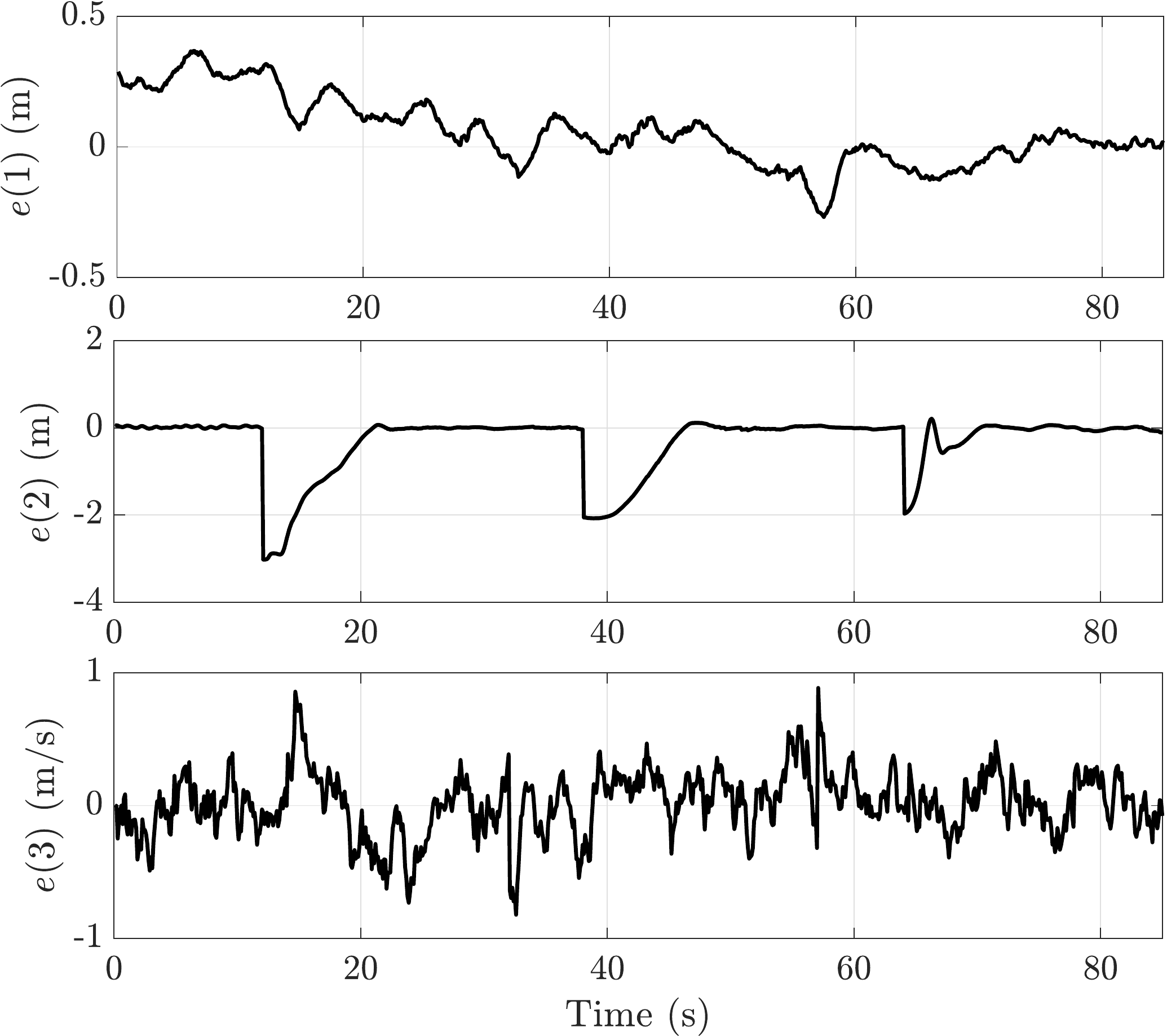}
\caption{The plane path following errors $\ebf$, include separation distance error $\ebf(1)$, round displacement error $\ebf(2)$, and velocity error $\ebf(3)$.}
\label{fig:tracking_err_exp-crop}
\end{figure}

%\[w_{j}^{t+1}(k)=\frac{1}{\left|\mathcal{N}_{j^{\prime}}^{t, k}\right|} \sum_{i \in \mathcal{N}_{j^{*}}^{t, k}} w_{i}^{t}(k)-\rho(t) \left(g_{j}^{k}\left(\mathbf{w}_{j}^{t}+\Delta_i(t)\right)\right)\]

	\section{Conclusion}
		In this paper, an inspection task, which is formulated as on-line plane estimation and plane following control, is considered. First, a vision based adaptive plane parameters observer is proposed, which can realize stable plane estimation under very mild conditions. Next, an MPC based plane follower that can automatically integrate the inspection references is designed. The feasibility and stability properties of the controller are also investigated. The effectiveness of the proposed plane estimation and following are verified and validated by simulations and experiments.

		In the future, the functional completeness will be further strengthened by integrating inspection boundary detection and collision avoidance so as to fulfill practical inspection tasks.
		\ifCLASSOPTIONcaptionsoff
		\newpage
		\fi

		% trigger a \newpage just before the given reference
		% number - used to balance the columns on the last page
		% adjust value as needed - may need to be readjusted if
		% the document is modified later
		%\IEEEtriggeratref{8}
		% The "triggered" command can be changed if desired:
		%\IEEEtriggercmd{\enlargethispage{-5in}}
		
		% references section
		
		% can use a bibliography generated by BibTeX as a .bbl file
		% BibTeX documentation can be easily obtained at:
		% http://mirror.ctan.org/biblio/bibtex/contrib/doc/
		% The IEEEtran BibTeX style support page is at:
		% http://www.michaelshell.org/tex/ieeetran/bibtex/
		%\bibliographystyle{IEEEtran}
		% argument is your BibTeX string definitions and bibliography database(s)
		%\bibliography{IEEEabrv,../bib/paper}
		%
		% <OR> manually copy in the resultant .bbl file
		% set second argument of \begin to the number of references
		% (used to reserve space for the reference number labels box)
		
		\bibliographystyle{IEEEtran}
		\bibliography{mybib}
		
		% biography section
		% 
		% If you have an EPS/PDF photo (graphicx package needed) extra braces are
		% needed around the contents of the optional argument to biography to prevent
		% the LaTeX parser from getting confused when it sees the complicated
		% \includegraphics command within an optional argument. (You could create
		% your own custom macro containing the \includegraphics command to make things
		% simpler here.)
		%\begin{IEEEbiography}[{\includegraphics[width=1in,height=1.25in,clip,keepaspectratio]{mshell}}]{Michael Shell}
		% or if you just want to reserve a space for a photo:
		
%		\begin{IEEEbiography}{Michael Shell}
%			Biography text here.
%		\end{IEEEbiography}
%		
%		% if you will not have a photo at all:
%		\begin{IEEEbiographynophoto}{John Doe}
%			Biography text here.
%		\end{IEEEbiographynophoto}
%		
%		% insert where needed to balance the two columns on the last page with
%		% biographies
%		%\newpage
%		
%		\begin{IEEEbiographynophoto}{Jane Doe}
%			Biography text here.
%		\end{IEEEbiographynophoto}
		
		% You can push biographies down or up by placing
		% a \vfill before or after them. The appropriate
		% use of \vfill depends on what kind of text is
		% on the last page and whether or not the columns
		% are being equalized.
		
		%\vfill
		
		% Can be used to pull up biographies so that the bottom of the last one
		% is flush with the other column.
		%\enlargethispage{-5in}

		% that's all folks
	\end{document}